\documentclass{article}
\PassOptionsToPackage{numbers, compress}{natbib}
\usepackage[preprint]{neurips_2026}
 
\usepackage[utf8]{inputenc} 
\usepackage[T1]{fontenc}    
\usepackage{hyperref}       
\usepackage{url}            
\usepackage{booktabs}       
\usepackage{amsfonts}       
\usepackage{nicefrac}       
\usepackage{microtype}      
\usepackage{xcolor}         

\usepackage{titlesec}
\titlespacing{\section}{0pt}{4pt}{2pt}
\titlespacing{\subsection}{0pt}{3pt}{1pt}
\titlespacing{\subsubsection}{0pt}{2pt}{1pt}

\usepackage[pdftex]{graphicx}
\usepackage{amsmath}
\usepackage{amssymb}
\usepackage{mathtools}
\usepackage{amsthm}
\usepackage{amsbsy}
\usepackage{adjustbox,multirow}
\usepackage{wrapfig}
\usepackage{subcaption}
\usepackage{enumitem}
\usepackage[capitalize,noabbrev]{cleveref}
\usepackage{titletoc}

\usepackage{bbm}
\usepackage{mathtools}
\usepackage{mleftright}

\makeatletter
\newcommand{\newreptheorem}[2]{%
    \newtheorem*{rep@#1}{\rep@title}%
    \newenvironment{rep#1}[1]{%
        \def\rep@title{#2 \ref{##1}}%
        \begin{rep@#1}}%
    {\end{rep@#1}}}
\makeatother

\newreptheorem{theorem}{Theorem}

\newreptheorem{corollary}{Corollary}

\newreptheorem{lemma}{Lemma}

\newtheorem{proposition}{Proposition}
\newreptheorem{proposition}{Proposition}

\theoremstyle{definition}

\newreptheorem{definition}{Definition}

\newtheorem{assumption}{Assumption}

\crefname{equation}{}{}
\crefname{section}{Section}{Sections}
\crefname{lemma}{Lemma}{Lemmas}
\crefname{assumption}{Assumption}{Assumptions}
\crefname{algorithm}{Algorithm}{Algorithms}
\crefname{theorem}{Theorem}{Theorems}
\crefname{proposition}{Proposition}{Propositions}
\crefname{corollary}{Corollary}{Corollaries}
\crefname{insight}{Insight}{Insights}
\crefname{definition}{Definition}{Definitions}
\crefname{figure}{Figure}{Figures}
\crefname{table}{Table}{Tables}

\DeclareMathOperator{\E}{\mathbb{E}}

\newcommand{\R}{\mathbb{R}}

\let\P\undefined

\newcommand{\P}{\mathbb{P}} 

\def\ddefloop#1{\ifx\ddefloop#1\else\ddef{#1}\expandafter\ddefloop\fi}
\def\ddef#1{\expandafter\def\csname bb#1\endcsname{\ensuremath{\mathbb{#1}}}}
\ddefloop ABCDEFGHIJKLMNOPQRSTUVWXYZ\ddefloop
\def\ddefloop#1{\ifx\ddefloop#1\else\ddef{#1}\expandafter\ddefloop\fi}
\def\ddef#1{\expandafter\def\csname b#1\endcsname{\ensuremath{\mathbf{#1}}}}
\ddefloop ABCDEFGHIJKLMNOPQRSTUVWXYZ\ddefloop
\def\ddef#1{\expandafter\def\csname c#1\endcsname{\ensuremath{\mathcal{#1}}}}
\ddefloop ABCDEFGHIJKLMNOPQRSTUVWXYZ\ddefloop
\def\ddef#1{\expandafter\def\csname h#1\endcsname{\ensuremath{\hat{#1}}}}
\ddefloop ABCDEFGHIJKLMNOPQRSTUVWXYZ\ddefloop
\def\ddef#1{\expandafter\def\csname wh#1\endcsname{\ensuremath{\widehat{#1}}}}
\ddefloop abcdefghijklmnopqrstuvwxyz\ddefloop
\def\ddef#1{\expandafter\def\csname hc#1\endcsname{\ensuremath{\widehat{\mathcal{#1}}}}}
\ddefloop ABCDEFGHIJKLMNOPQRSTUVWXYZ\ddefloop
\def\ddef#1{\expandafter\def\csname t#1\endcsname{\ensuremath{\widetilde{#1}}}}
\ddefloop ABCDEFGHIJKLMNOPQRSTUVWXYZ\ddefloop
\def\ddef#1{\expandafter\def\csname tc#1\endcsname{\ensuremath{\widetilde{\mathcal{#1}}}}}
\ddefloop ABCDEFGHIJKLMNOPQRSTUVWXYZ\ddefloop

\usepackage{bm}
\makeatletter
\newcommand{\vect}[1]{%
  \ifcat\noexpand#1\relax
    \bm{#1}
  \else
    \mathbf{#1}
  \fi
}
\makeatother

\newcommand{\zeros}{\vect{0}}
\usepackage{prettyref}
\usepackage{xcolor}

\newcommand*\diff{\mathop{}\!\mathrm{d}}

\newcommand{\kl}[2]{\mathrm{D}_\mathrm{KL}\left(#1 \ \middle\| \ #2\right)}

\newcommand{\data}{\mathcal{D}}

\newcommand{\indep}{\perp\!\!\!\!\perp} 

\makeatletter
\DeclarePairedDelimiter{\absplain}{\lvert}{\rvert}
\DeclareRobustCommand{\abs}{\@ifstar{\absplain}{\absplain*}}
\makeatother
\DeclareRobustCommand{\lpar}{\mleft(}
\DeclareRobustCommand{\rpar}{\mright)}
\DeclareRobustCommand{\lbar}{\mleft[}
\DeclareRobustCommand{\rbar}{\mright]}
\DeclareRobustCommand{\lcbar}{\mleft\{}
\DeclareRobustCommand{\rcbar}{\mright\}}

\newcommand{\xhdr}[1]{\textbf{#1.}}

\newcommand{\normal}[2]{\mathcal{N}(#1, #2)}

\newcommand{\obs}{\mathrm{\vect{O}}}

\newcommand{\bernoulli}{\mathrm{Bernoulli}}
\newcommand{\sigmoid}{\mathrm{sigmoid}}
\newcommand{\softargmax}{\mathrm{softmax}}
\newcommand{\uniform}{\mathrm{Unif}}

\newcommand{\lowerbound}{\ell}
\newcommand{\upperbound}{u}

\definecolor{orangesignal}{HTML}{FE6100}
\definecolor{highlightsignal}{HTML}{FFB000}
\definecolor{purplesignal}{HTML}{785ef0}
\definecolor{bluesignal}{HTML}{648fff}
\definecolor{pinksignal}{HTML}{dc267f}

\newcommand{\stepindicator}[1]{%
  {\textbf{\textit{(#1)}}}%
}

\definecolor{citationcolor}{HTML}{648fff} 
\definecolor{linkcolorcustom}{HTML}{648fff}

\hypersetup{ 
    colorlinks=true,       
    linkcolor=pinksignal,
    citecolor=citationcolor, 
    urlcolor=linkcolorcustom 
}

\newcommand{\method}{IV-ICL}

\title{\method: Bounding Causal Effects with Instrumental Variables via In-Context Learning}

\author{%
  Vahid Balazadeh\thanks{Correspondence to \url{vahid@cs.toronto.edu}}\;\,{}$^{1}$$^{2}$ \ \ \ Hamidreza Kamkari$^{3}$ \ \ \ Medha Barath$^{1}$  \\ \textbf{Ricardo Silva}$^{4}$ \ \ \ \textbf{Rahul G. Krishnan}$^{1}$$^{2}$\\
  {}$^1$ University of Toronto\quad {}$^2$ Vector Institute \quad 
  {}$^3$ MIT CSAIL \quad {}$^4$ University College London
}

\begin{document}

\maketitle

\begin{abstract}
The instrumental-variables (IV) setting is standard for partial identification of causal effects when unobserved confounding makes point identification impossible. Existing approaches face methodological bottlenecks: closed-form bound estimands are required---e.g., Balke-Pearl equations in binary IV---and even when available, designing accurate estimators requires manual effort tailored to each estimand. While direct Bayesian inference of the causal effects, instead of the bounds, circumvents these challenges, it is often computationally intensive and suffers from high prior sensitivity or under-dispersed posteriors. As a remedy, we introduce \method, an \emph{amortized} Bayesian in-context learning method that learns the marginal posterior distribution of the causal effects directly and derives bounds as its quantiles. Unlike standard variational inference that optimizes exclusive KL divergence, amortized Bayesian inference minimizes the expected \emph{inclusive} KL, a mass-covering objective. We empirically observe that optimizing inclusive KL can recover the entire identified set across diverse data-generating processes, while exclusive-KL (e.g.\ with variational inference) of the same Bayesian formulation collapses onto a single mode and fails to cover the identified set. We evaluate \method\ on synthetic and semi-synthetic IV benchmarks and show it produces intervals that are more reliably valid and more informative compared to efficient semi-parametric, Bayesian, and plug-in baselines, at $20$--$500\times$ lower inference time. Beyond methodology, we propose a procedure to convert randomized controlled trials into IV benchmarks with provably preserved ground-truth causal effects that enables a more realistic evaluation of partial-identification methods.
\end{abstract}

\section{Introduction}

Estimating causal effects from observational data is often based on untestable assumptions about the data-generating process (DGP)~\citep{pearl2009causality,imbens2015causal}. The most common assumption of no unobserved confounding, that all non-causal paths between the treatment and the outcome are blocked by observed variables~\citep{rubin1974estimating,vanderWeele2013OnTD}, rarely holds in practice, e.g., in randomized experiments with non-compliance~\citep{angrist1995identification,angrist2009mostly,cinelli2025challenges}. When it fails, the causal effects are no longer point-identified, motivating partial identification~\citep{manski2003partial,richardson2015nonparametric}. The goal of partial identification is to characterize the \emph{set} of causal effects compatible with the observed distribution under weaker structural assumptions; the standard instrumental variable (IV) model is the most prominent example~\citep{wright1928tariff,reiersol1945confluence,baker1994paired,angrist1996identification,balke1997bounds,swanson2018partial}.

\xhdr{Two routes for partial identification} A first route, established  for the binary IV setting by  \citet{balke1994counterfactual,balke1997bounds}, derives closed-form bound estimands as functionals of the observed distribution by solving a linear program. Generalizations to discrete or continuous variables~\citep{tian2000probabilities,dawid2003causal,zhang2021bounding,zhang2021non,sachs2023general,duarte2024automated} or to graphs with covariates~\citep{cai2007non,tsiatis2008covariate,long2013sharpening,levis2025,whitehouse2025inference} follow the same pattern but require re-solving the optimization in each new setting (\textbf{L1}).\footnote{Throughout, we use (\textbf{L1})--(\textbf{L5}) to label key limitations of various methods for partial identification across the literature.} Even when a closed-form estimand is available, estimating it from finite data is expensive: the bound estimands are non-smooth (maxima or minima of linear expressions are functions with sharp edges), and obtaining low-variance semi-parametric estimators in presence of high-dimensional covariates requires manually designing estimators~\citep{cai2007non,levis2025,whitehouse2025inference}~(\textbf{L2}).

A second route forgoes closed-form bound estimands and instead infers the causal effects directly via Bayesian inference~\citep{chickering1996clinician,gustafson2005model,greenland2009relaxation,gustafson2010bayesian}: the DGPs themselves are treated as random variables with a wide-prior, the marginal posterior distribution of the causal effects spreads mass across the entire identified set, and its quantiles serve as natural estimators of the lower and upper bounds. This avoids \textbf{L1} and \textbf{L2} entirely. However, classical Bayesian inference via Markov chain Monte Carlo (MCMC) or variational inference has three primary issues. First, along the unidentifiable directions of the parameter space where the likelihood is flat, MCMC mixes poorly and variational inference under-disperses (\textbf{L3}). Second, estimates derived from the posterior are sensitive to the choice of prior, so prior's support over DGPs must be wide enough to span the identified set (\textbf{L4})~\citep{silva2016causal,jesson2020identifying}. Finally, inference must be re-run from scratch per dataset, making large-scale IV analyses infeasible (\textbf{L5}).

\xhdr{Amortized Bayesian inference} Recent work suggests that one way to cut the cost of Bayesian inference is amortized learning with in-context models: pre-train a single transformer on a wide library of synthetic DGPs, and perform inference on new datasets in a single forward pass. Prior-data Fitted Networks (PFNs) are a successful instantiation of this idea~\citep{müller2023transformers,hollmann2025accurate,ma2025tabdpt,qu2025tabicl}. This paradigm has proven successful for causal inference under \emph{point-identifying} assumptions~\citep{balazadeh2025causalpfn,robertson2025pfn,ma2025foundation}. However, extending it to partially identifiable settings requires more work. Flat likelihoods in the unidentifiable directions of the parameter space make the marginal posterior over causal effects prior-sensitive. The goal is no longer to estimate a single effect, but the entire range of feasible causal effects. One can resort to amortizing the marginal posterior over the point-identified lower/upper bound estimands, such as closed-form Balke-Pearl equations~\citep{balke1997bounds}, but this re-introduces limitation \textbf{L1}. 

\xhdr{PFNs meet partial identification} We observe that the PFN training objective is equivalent to minimizing the \emph{inclusive} KL between the true marginal posterior and the model's predicted distribution~\citep{müller2023transformers,nagler2023statistical}. Since inclusive KL is mass-covering, minimizing it pushes the model to spread density wherever the true marginal posterior does. This highlights the challenge in the design of a valid prior for the IV setting: inclusive KL only delivers mass-coverage with respect to the prior, so a partial-identification PFN is only as informative as the breadth of the prior's support over DGPs; we therefore design a prior whose induced causal effect distribution spans a wider attainable range. On such a prior, the trained posterior avoids the under-dispersion that breaks exclusive KL variational inference, so its quantiles are usable as bounds. Building on these observations, we introduce \method, a Bayesian method for partial identification that resolves \textbf{L3}--\textbf{L5} via amortized in-context learning with a wide-range prior and a mass-covering training objective (\cref{fig:main}). Our contributions: 
\begin{figure*}
    \centering
    \captionsetup{font=footnotesize}
    \includegraphics[width=0.85\linewidth]{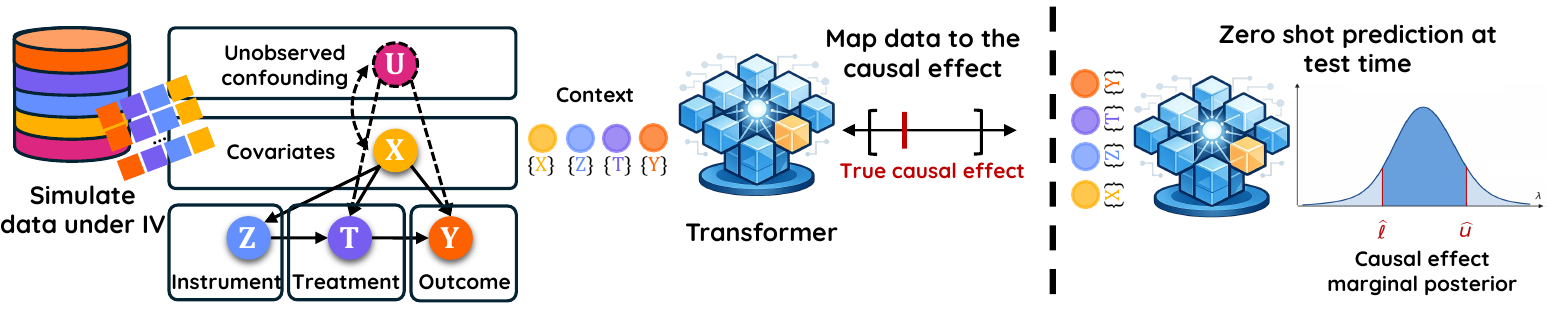}
    \caption{\textbf{\method\ pipeline.} During pre-training, we generate a diverse library of synthetic datasets with IV structure and known ground-truth causal effects, and train a transformer to map each dataset to the marginal posterior of its causal effect. At inference, the trained model takes any new IV dataset as context and returns the causal effect marginal posterior in a single forward pass; bounds are read off as marginal posterior quantiles.}
    \label{fig:main}
    \vspace{-1.5em}
\end{figure*}
\begin{itemize}[leftmargin=*]
    \item A method that does not require solving linear programs to derive the bounds (\textbf{L1}) and amortizes inference into a single computationally cheap forward pass per dataset (\textbf{L2}, \textbf{L5}).
    \item A general-purpose prior over IV-consistent synthetic DGPs, designed to have wide support which allows for better transfer and generalization at deployment (\textbf{L4}).
    \item A theoretically justified procedure to convert data from randomized controlled trials (RCT) into IV benchmarks with preserved ground-truth causal effects, allowing for a realistic IV evaluation.
    \item Empirical evidence, on synthetic and RCT-derived benchmarks, that \method\ produces tighter and more reliably valid intervals (\textbf{L3}) than efficient semi-parametric, Bayesian, and plug-in baselines. 
\end{itemize}
\section{Related Work}
 \xhdr{Bayesian Partial Identification}
Bayesian inference has been used as a framework for partial identification~\citep{rubin1978bayesian,chickering1996clinician,silva2016causal,franks2020flexible,jesson2020identifying}. One strategy is to follow transparent parameterization of the potential outcome model to distinguish between identifiable from non-identifiable parts~\citep{richardson2011transparent,silva2016causal}; this separates the epistemic uncertainty from the uncertainty inherent in the unidentifiable set but requires per-problem manual effort. The other strategy directly learns the marginal posterior of causal effects~\citep{gustafson2005model,greenland2009relaxation,gustafson2010bayesian}, but historically has been limited by under-dispersed variational inference posteriors (\textbf{L3}), prior sensitivity (\textbf{L4}), and computational cost (\textbf{L5}). \method\ pursues the latter strategy and aims to resolve \textbf{L3}--\textbf{L5} via amortized in-context learning with a wide-range prior and a mass-covering training objective.
 
\xhdr{Covariate-Adjusted Bounds}
Incorporating covariates can sharpen IV bounds by exploiting conditional independence structures~\citep{cai2007non}. Recent work has developed frequentist estimators for covariate-adjusted bounds: \citet{levis2025} derive nonparametrically efficient estimators using smooth approximations to the non-differentiable Balke-Pearl functionals, while \citet{whitehouse2025inference} develop general methodology for inference on irregular functionals via smoothing. Both approaches operate on closed-form Balke-Pearl estimands (\textbf{L1}) and require careful nuisance estimation under high-dimensional covariates (\textbf{L2}). Our work bypasses \textbf{L1} entirely by not relying on closed-form bounds and addresses \textbf{L2} through amortized inference.
 
\xhdr{Automated and Optimization-Based Approaches}
General partial identification problems can be formulated as constrained optimization over response functions. For instance, \citet{duarte2024automated} develop automated methods for discrete settings via linear programming.
The computational complexity of such methods, however, grows rapidly with the number of variable categories. A parallel line of work optimizes over distributions of response functions directly using deep generative or neural-network parameterizations~\citep{kilbertus2020class,xia2022neural,balazadeh2022partial,padh2023stochastic,schweisthal2024learning}. While flexible, these methods solve complex and costly optimization problems per dataset (\textbf{L5}).
 
\xhdr{Amortized Inference for Causal Effects}
Recent work leverages transformers for amortized causal inference~\citep{zhang2023towards,bynum2025black,mahajan2025amortized}. CausalPFN~\citep{balazadeh2025causalpfn} demonstrated that pre-training on synthetic data enables causal effect estimation under unconfoundedness, Do-PFN~\citep{robertson2025pfn} handles interventional queries, and \citet{ma2025foundation} developed foundation models for front-door estimation and IV regression under the conditional mean model. These amortized methods learn point-identified estimands and are not amenable to partial identification. In our setting, the objective is to learn an identified \emph{set}, not a point.

\xhdr{Applications of Binary Instrumental Variables} Binary IVs with bounded outcome---the setting we use in this work---appear across a wide range of fields: lift studies in digital advertising~\citep{johnson2017ghost}, randomized encouragement designs in healthcare~\citep{hirano2000assessing}, lottery-based admissions in education~\citep{abdulkadirouglu2011accountability}, and clinical trials with non-compliance~\citep{angrist1996identification}. Many of these settings involve large number of analyses, where per-dataset inference cost becomes a bottleneck (\textbf{L5}).
\section{Background}
\label{sec:background}
\xhdr{Notation and Assumptions} We adopt the potential outcome framework \citep{rubin2005causal}. Assume we observe a dataset $\data$ of $N$ i.i.d. samples from random variables $\vect O = \lpar \vect X, Z, T, Y \rpar$, where $\vect X \in \R^d$ denotes the vector of observed covariates, $Z \in \{0, 1\}$ is the binary instrument, $T \in \{0, 1\}$ denotes the binary treatment, and $Y \in [0, 1]$ is the bounded outcome (binary or continuous). For $z, t \in \{0, 1\}$, we let $T(z)$ be the potential treatment under intervention $Z \coloneqq z$. We further denote the potential outcome under intervention $T \coloneqq t$ by $Y(t)$. Finally, we define $Y(z, t)$ as the potential outcome under intervention $Z \coloneqq z, T \coloneqq t$.\footnote{We focus on the binary setting as most mature methods and evaluation benchmarks exist only for this setting.} Our goal is to estimate the sample average treatment effect (SATE):
\begin{equation}
    \label{eq:sample-ate}
    \lambda(\data) \coloneqq \frac{1}{N} \sum_{i=1}^N \E\lbar Y(t=1) - Y(t=0) \mid \vect X = \vect x_i \rbar.
\end{equation}
We generally cannot rule out the existence of unobserved variables that can confound the causal effect of the treatment on the outcome. In such settings, the SATE is not point-identified, i.e., it cannot be uniquely determined from the data alone, as multiple causal effects are compatible with the same observed distribution~\citep{rubin1974estimating,pearl2009causality}. We thus turn to partial identification: estimating an informative set interval $[\lowerbound, \upperbound]$ that contains the true SATE. We specifically focus on the standard IV setting, which can provably lead to nontrivial values of $\lowerbound$ and $\upperbound$~\citep{balke1994counterfactual,balke1997bounds, manski1990nonparametric}: 
\begin{assumption}[Standard Instrumental Variable] \label{assump:iv} \stepindicator{i} \textbf{Consistency}: Observed variables match their relevant potential variables, i.e., $Y = Y(z, t)$ for all $Z = z$ and $T = t$, and $T = T(z')$ for all $Z = z'$. \stepindicator{ii} \textbf{Positivity}: All covariate strata have positive probability of each instrument value, so conditioning on the instrument $Z$ is well-defined, i.e., $\P\lpar Z = 1 \mid \vect X \rpar \in [\epsilon, 1 - \epsilon]$ for some $ \epsilon \in (0, 0.5)$. \stepindicator{iii} \textbf{Exclusion Restriction}: $Z$ has no direct effect on $Y$ other than through $T$, i.e., $Y(z, t) = Y(t)$ for all $z, t \in \{0, 1\}$. \stepindicator{iv} \textbf{Unconfoundedness of the Instrument}: Conditional on observed covariates, the instrumental variable is independent of potential outcomes, i.e., $Z \indep Y(t) \mid \vect X$ for all $t \in \{0, 1\}$.
\end{assumption}

\xhdr{Balke-Pearl Bounds} In the binary IV regime, \citet{balke1997bounds} derived closed-form bounds on the conditional average treatment effect by expressing them as the maximum and minimum of a set of linear functions of the conditional probabilities $p_{yt.z}(\vect x) \coloneqq \P(Y = y, T = t \mid Z = z, \vect X = \vect x)$ for $y, t, z \in \{0, 1\}$. We denote these conditional bounds by $[\lowerbound(\vect x), \upperbound(\vect x)]$. The full equations are provided in \cref{appx:balke-pearl-eqs}. Note that these closed-form bounds do not appear in the training/inference pipeline of \method. We use them only for evaluation purposes on synthetic benchmarks.
\section{Bounding SATE via Bayesian Inference}
\label{sec:bayesian}
Consider a parameterized family of joint distributions $\{P_\psi\}_{\psi \in \Psi}$ over observed variables and all the potential treatments and outcomes. We specifically consider the following factorization:
\begin{align}
\label{eq:joint_distribution}
    &P_\psi \big( \vect X,\ Z,\ \{T(z)\}_{z \in \{0,1\}},\ \{Y(t)\}_{t \in \{0,1\}},\ T,\ Y \big) \\
    &= P_\psi \big( \vect X  \big) \cdot P_\psi \lpar Z \mid \vect X \rpar \cdot P_\psi \big( \{T(z)\}_{z \in \{0,1\}}, \{Y(t)\}_{t \in \{0,1\}} \mid \vect X \big)  \cdot \bbI \big\{ T = T(Z), Y = Y(T) \big\}. \nonumber
\end{align}
This factorization is specifically designed to satisfy \cref{assump:iv}. The term {\small $\bbI \big\{ T \!=\! T(Z), Y \!=\! Y(T) \big\}$} enforces consistency. $Z$ is conditionally independent of potential outcomes given $\vect X$, and the potential outcome only depends on the intervention $T \coloneq t$. 
 
Now, consider a prior $\pi$ on $\Psi$. Given an observational dataset $\data$, we can compute a posterior belief on the parameters using Bayes' rule, $\pi \lpar \psi \mid \data \rpar \propto \pi(\psi) \times P_\psi^\obs\lpar \data\rpar$, 
where $P_\psi^{\obs}$ is the marginal distribution of $P_\psi$ on the observed random variables.
 
\xhdr{Marginal Posterior of the SATE} For each parameter $\psi$ and dataset $\data$, the joint distribution $P_\psi$ in \cref{eq:joint_distribution} pins down a SATE, i.e., $\lambda(\psi; \data) \coloneqq \frac{1}{|\data|} \sum_{\vect x_i \in \data} \E_{P_\psi}[Y(t = 1) - Y(t = 0) \mid \vect X = \vect x_i]$. We then define the marginal posterior of SATE as
\begin{equation}
\label{eq:ppd-ate}
    \pi_{\text{sate}}\lpar B \mid \data\rpar \coloneqq \int \bbI \lcbar \lambda(\psi; \data) \in B \rcbar \ \pi\lpar \diff \psi \mid \data \rpar,
\end{equation}
where $B$ is an arbitrary Borel subset of $\R$. Under a sufficiently broad prior, $\pi_{\text{sate}}$ assigns non-negligible mass across the entire identified set, so its quantiles serve as natural estimators of the lower and upper bounds. Specifically, for a chosen $\alpha \in (0, 1)$, we define
\begin{equation}
\label{eq:low-up-interval}
    \underline{\lambda}_\alpha(\data)
    \coloneqq
    Q_{\alpha/2}\!\big(\pi_{\text{sate}}\lpar \cdot \mid \data\rpar\big), \; 
    \overline{\lambda}_\alpha(\data)
    \coloneqq
    Q_{1-\alpha/2}\!\big(\pi_{\text{sate}}\lpar \cdot \mid \data\rpar\big),
\end{equation}
where $Q_{\beta}(\mu)$ denotes the $\beta$-quantile of a distribution $\mu$. We return the interval $\lbar \underline{\lambda}_\alpha(\data),\; \overline{\lambda}_\alpha(\data)\rbar$ as our final estimator of the SATE bound. This formulation does \emph{not} require a closed-form estimand for the lower or upper bound, resolving \textbf{L1}.
 
\xhdr{Why Standard Bayesian Inference Fails} Classical Bayesian inference of $\pi_{\text{sate}}$ via MCMC or variational inference  can suffer from three issues along the unidentifiable parts of $\Psi$, where the likelihood $P_\psi^\obs$ is flat~\citep{silva2016causal}. First, MCMC mixes poorly along these directions, and variational inference---which minimizes the mode-seeking exclusive KL---under-disperses, collapsing onto a single mode of the identified set (\textbf{L3}). Second, the posterior shape along these directions is dictated entirely by the prior even with infinite data, producing prior sensitivity (\textbf{L4})~\citep{silva2016causal,jesson2020identifying}. Third, inference must be re-run per dataset, which is computationally prohibitive at scale (\textbf{L5}). \Cref{sec:implementing} addresses \textbf{L3}--\textbf{L5} via amortized in-context learning with an inclusive KL objective.
\section{Implementing the Bayesian Approach via Amortized In-Context Learning}
\label{sec:implementing}
To implement the Bayesian framework described above, we must specify: \stepindicator{i} a computational method for approximating the marginal posterior $\pi_{\text{sate}}$, and \stepindicator{ii} the parameter space $\Psi$ and prior $\pi$. We address the former first via amortized inference, then describe our prior construction in \cref{sec:prior-generation}.
 
\xhdr{Approximating the Posterior via Amortized In-Context Learning}
Rather than computing the posterior for each dataset, we employ amortized inference and train a single model that directly approximates the marginal posterior of the SATE for any input dataset. Specifically, we seek a model $q_\theta\lpar \cdot \mid \data \rpar \approx \pi_{\text{sate}} \lpar \cdot \mid \data \rpar$, where $\theta$ are the model parameters. We learn $\theta$ such that this approximation holds across diverse datasets. Specifically, we optimize the following loss
\begin{equation}
\label{eq:loss-fnc}
    \cL(\theta) \coloneqq \E_{\psi \sim \pi,\, \tilde{\data} \sim P_\psi^{\obs}} \lbar -\log q_\theta\lpar \lambda(\psi; \tilde{\data}) \mid \tilde{\data}\rpar \rbar.
\end{equation}
At each evaluation, we first sample a parameter $\psi \sim \pi$, then sample an observational dataset $\tilde{\data}$ of i.i.d. samples from $P_\psi^{\obs}$. Since $\psi$ is known, the ground-truth SATE $\lambda(\psi; \tilde{\data})$ is computable directly from the data-generating process without any closed-form bound estimand. The loss is the negative log-likelihood under $q_\theta$ of the ground-truth SATE values for datasets sampled from the prior.
 
\xhdr{Equivalence to Inclusive KL}
A key property of the loss in \cref{eq:loss-fnc} is that its optimization is equivalent to minimizing the expected inclusive KL between the true marginal posterior and our model:
\begin{equation}
\label{eq:forward-kl-equivalence}
    \arg\min_\theta \cL(\theta) \;=\; \arg\min_\theta \, \E \lbar \kl{\pi_{\text{sate}}\lpar \cdot \mid \tilde{\data}\rpar}{q_\theta\lpar \cdot \mid \tilde{\data}\rpar}\rbar,
\end{equation}
where the outer expectation is over datasets drawn from the prior. Similar derivations have been shown in prior work\citep{müller2023transformers,nagler2023statistical}; for completeness, we provide ours in \cref{appx:forward-kl}.
 
This equivalence is the key technical reason \method\ resolves \textbf{L3}. The inclusive KL is \emph{mass-covering}: minimizing it forces $q_\theta$ to assign positive density wherever $\pi_{\text{sate}}$ has mass, which prevents the under-dispersed posteriors that arise when minimizing the mode-seeking exclusive KL via standard variational inference. In partial identification, where $\pi_{\text{sate}}$ is intentionally diffuse over the identified set, this property is essential for valid bound estimation.

\xhdr{Architecture and Inference}
We follow a similar architecture to standard PFNs \citep{hollmann2023tabpfn,balazadeh2025causalpfn} and model $q_\theta$ as a transformer encoder taking the entire observational dataset $\tilde{\data}$ as its context. We embed each row $\vect o^{(i)} \in \tilde{\data}$ into a single token with a fixed dimension via a learnable embedding layer. The encoder processes the resulting sequence of $|\tilde{\data}|$ tokens through a stack of self-attention and MLP layers. To produce a dataset-level representation, we average the output tokens of the encoder. We omit positional encoding to enforce permutation-invariance over the rows. Full training and architectural details, including hyperparameters, appear in \cref{appx:training-infer-ate}.
 
\subsection{Prior Data Generation}
\label{sec:prior-generation}
Optimizing the training loss in \cref{eq:loss-fnc} does \emph{not} require an explicit parametrization for $P_\psi$. Instead, it only requires a prior that can: \stepindicator{i} generate observational datasets $\tilde{\data}$ consisting of tuples $\lpar \vect x, z, t, y\rpar$ from $P_\psi^\obs$ while adhering to the IV structure in \cref{eq:joint_distribution}, and \stepindicator{ii} compute the ground-truth SATE $\lambda(\psi; \tilde{\data})$. Still, not every choice of prior leads to useful posteriors. If the prior is not sufficiently rich and fails to capture a diverse range of DGPs, then even the optimal approximation $q_{\theta^\star} \approx \pi_{\text{sate}}$ will not generalize to new unseen datasets. Moreover, the prior must distribute mass broadly over the range of possible SATE values: if it concentrates near zero, the trained model will inherit that bias and produce overly narrow posteriors at inference time (\textbf{L4}).
 
To address diversity, we draw inspiration from the success of tabular foundation models~\citep{hollmann2025accurate,ma2025tabdpt,qu2025tabicl} and construct a prior that contains a rich set of datasets with sufficient coverage to approximate real-world scenarios. Our setting, however, introduces a constraint absent in standard supervised tabular models: the joint distribution $P_\psi$ must satisfy the IV constraints stated in \cref{assump:iv}, i.e., each $P_\psi$ must admit the factorization in \cref{eq:joint_distribution}, with the instrument $Z$ being unconfounded given covariates and having no direct effect on the outcome beyond its influence through the treatment. Hence, we propose the following prior data generation process, mirroring the factorization in \cref{eq:joint_distribution}:
 
1. \textbf{Base table and covariate generation} $P_\psi \big( \vect X  \big)$: We begin by generating a ``base'' table with varying numbers of rows and columns, drawn from a large collection of synthetic datasets. We follow the prior data generation scheme used by tabular foundation models in \citet{hollmann2025accurate,balazadeh2025causalpfn}: features are produced by passing samples through randomly initialized neural networks, ensuring the prior covers a rich set of tabular feature distributions. From the base table, we randomly select columns and denote them as our covariates $\vect X$.

2. \textbf{Instrument propensity} $P_\psi \lpar Z \mid \vect X \rpar$: To guarantee the unconfoundedness of the instrument (\cref{assump:iv}), we generate the instrument $Z$ as \emph{only} a function of the covariates. We create a random function $f_\psi: \R^{d'} \to \R$, using synthetic function generators similar to \citet{balazadeh2025causalpfn}. The instrument propensity is then defined as $P_\psi\lpar Z = 1 \mid \vect X \rpar = \sigmoid\lpar f_\psi(\vect X)\rpar$.

3. \textbf{Potential outcome and treatment generation} $P_\psi \big( \{T(z)\}_{z \in \{0,1\}},\ \{Y(t)\}_{t \in \{0,1\}} \mid \vect X \big)$: The conditional distribution of the potential treatments and outcomes can be characterized by a $16$-dimensional probability vector, where each of the $16$ entries corresponds to one of the $|\{0, 1\}^4| = 16$ configurations of the binary $T(z=0), T(z=1), Y(t=0)$, and $Y(t=1)$. We denote this probability vector by $\vect p_\psi(\vect X) \in \Delta^{15}$ in the probability simplex. 

To sample these conditional probability vectors, we select $16$ additional columns from the base table to get a raw ${\vect g^0_\psi}(\vect X) \in \R^{16}$ and apply a softmax function to obtain an initial $\vect p^0_\psi(\vect X) = \softargmax({\vect g^0_\psi}(\vect X))$. 
To ensure that the resulting samples cover the entire probability simplex (\textbf{L4}), we apply a post-processing step. We first sample a target average vector $\overline{\vect p}_\psi \sim \mathrm{Dirichlet}(\gamma, \ldots, \gamma)$ in the simplex. We choose a small concentration parameter (e.g., $\gamma = 0.1$) to get draws that are sparse and spread broadly over the simplex. We then add a single constant vector $\vect c_\psi \in \R^{16}$ to the pre-softmax logits of every individual: ${\vect g_\psi}(\vect x_i) \leftarrow {\vect g^0_\psi}(\vect x_i) + \vect c_\psi$.  We specifically choose $\vect c_\psi = \log \overline{\vect p}_\psi - \log (\sum_{i} \vect p_\psi(\vect x_i)/N)$, 
where the logarithms are componentwise. We then calculate the final probability vector by applying $\vect p_\psi(\vect X) = \softargmax({\vect g_\psi}(\vect X))$ again. This log-domain shift translates each per-individual logit toward the target by an amount equal to the log-discrepancy between target and current population mean. The shift is approximate (because $\softargmax$ does not commute with averaging) but is sufficient to push the population mean substantially toward $\overline{\vect p}_\psi$, and consequently to reshape the prior distribution of $\lambda(\psi; \tilde \data)$ to span a wide range of values across $[-1, 1]$. \cref{fig:prior-support} illustrates the effect on the prior distribution of $\lambda(\psi; \tilde \data)$.
 
4. \textbf{Computing the SATE and simulation of observational data}: With the probability vector $\vect p_\psi$ at hand, we can analytically compute
$\lambda(\psi; \tilde \data) = \frac{1}{N} \sum_{i=1}^N \E_{\vect p_\psi}[Y(t=1) - Y(t=0) \mid \vect X = \vect x_i]$. Moreover, to generate the observational data $\tilde{\data}$, we sample $T(z=0), T(z=1), Y(t=0)$, and $Y(t=1)$ from $\vect p_\psi (\vect X)$ and the instrument $Z$ from the $P_\psi(Z\mid \vect X)$. We then construct the observed treatment and outcome via consistency (\cref{assump:iv}) as $T = T(z=Z)$ and $Y = Y(t=T)$.
\begin{figure}[t]
    \centering
    \captionsetup{font=footnotesize}
    \includegraphics[width=0.86\linewidth]{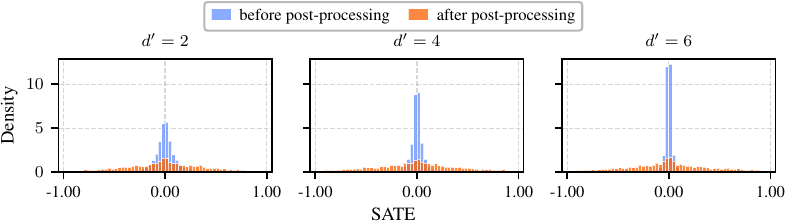}
    \caption{\textbf{The effect of post-processing on SATE distribution}. $d'$ refers to the covariate dimension.}
    \label{fig:prior-support}
    \vspace{-2em}
\end{figure}
 
\xhdr{Remark} When the outcome is continuous, we apply the discretization method in \citet[\S6]{levis2025}; this reduces a continuous outcome IV into solving multiple binary outcome IVs. Because computing the SATEs requires only the synthetic DGP and not closed-form bounds, the framework does not, in principle, require the binary-outcome assumption.
\section{A More Realistic Evaluation: Converting RCTs to IV Benchmarks}
\label{sec:rct_to_iv_benchmark}
Evaluating IV methods is challenging because ground-truth causal effects are rarely known for real datasets. Fully synthetic benchmarks allow exact evaluation but may not reflect realistic data distributions. To bridge this gap, we propose a procedure to convert RCTs into observational IV datasets while preserving the ground-truth causal effects from the experiment.
 
RCTs provide gold-standard estimates of causal effects through randomization. However, RCT data cannot directly serve as IV benchmarks because there is no instrument-treatment-confounder structure. Our key insight is that we can \emph{synthetically} introduce such structure while using accept-reject sampling to ensure the resulting observational outcomes match the original RCT outcomes. This preserves the ground-truth causal effect while creating a realistic IV estimation problem.
 
Given an RCT dataset of i.i.d. samples from $(\vect X, T^{\text{rct}}, Y)$ with \emph{balanced} treatment assignment, we construct an IV dataset as follows. \stepindicator{i} Split $\vect X$ into observed covariates $\vect O$ (available to estimators) and hidden confounders $\vect U$. \stepindicator{ii} Sample the instrument $Z \sim \bernoulli(p_z(\vect O))$, where the propensity $p_z$ depends only on observed covariates. This ensures the instrument satisfies conditional unconfoundedness. \stepindicator{iii} Sample the treatment $T^{\text{synth}} \sim \bernoulli(p_t(\vect O, \vect U, Z))$, where $p_t$ depends on observed covariates, hidden confounders, and the instrument. Note that $p_t$ is synthetically designed to introduce confounding through dependence on $\vect U$; the dependence on $Z$ provides instrument relevance. \stepindicator{iv} Retain samples if and only if $T^{\text{synth}} = T^{\text{rct}}$. For accepted units, use the original $Y$. The following establishes that this procedure produces valid IV data, proved in \cref{appx:proof-rct-to-iv}.
 
\begin{proposition}
\label{prop:rct-to-iv}
Suppose the original RCT has a treatment assignment with $\P(T^{\text{rct}} = 1) = 0.5$, $p_t(\vect O, \vect U, Z) \in [\epsilon, 1-\epsilon]$, and $p_z(\vect O) \in [\epsilon', 1-\epsilon']$ for some $\epsilon, \epsilon' > 0$. Then the dataset resulting from the above procedure satisfies the following.
 
\xhdr{(IV Validity)} The data-generating process satisfies \cref{assump:iv}.
    
\xhdr{(Causal Effect Preservation)} The population average treatment effect (PATE) in the accepted population equals the RCT PATE:
    \begin{equation*}
    \begin{aligned}
        &\E[Y(1) \!-\! Y(0) \!\mid\! T^{\text{rct}} = T^{\text{synth}} ] = \E[Y\! \!\mid\! T^{\text{rct}} \!=\! 1] - \E[Y\! \!\mid\! T^{\text{rct}} \!=\! 0].
    \end{aligned}
    \end{equation*}
\end{proposition}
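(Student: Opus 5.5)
The plan is to model the whole procedure as one joint distribution over $(\vect O, \vect U, Z, T^{\text{synth}}, T^{\text{rct}}, Y(0), Y(1))$ and then condition on the acceptance event $A \coloneqq \{T^{\text{synth}} = T^{\text{rct}}\}$. By randomization in the RCT, $T^{\text{rct}} \indep (\vect X, Y(0), Y(1))$ with $\P(T^{\text{rct}}=1)=0.5$. By construction, $Z$ depends only on $\vect O$ plus fresh noise. Likewise $T^{\text{synth}}$ depends only on $(\vect O,\vect U,Z)$ plus fresh noise, independent of $T^{\text{rct}}$ and of the potential outcomes given $\vect X=(\vect O,\vect U)$. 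The main computation is the acceptance probability:
\begin{equation*}
\P(A \mid \vect O,\vect U,Z,Y(0),Y(1)) = \tfrac12 p_t + \tfrac12(1-p_t) = \tfrac12 .
\end{equation*}
This uses balance, $\P(T^{\text{rct}}=1)=0.5$, together with the independence of $T^{\text{rct}}$ from everything else. So acceptance is independent of $(\vect X, Z, Y(0), Y(1))$, and the accepted population has the same joint law of $(\vect X, Z, Y(0), Y(1))$ as before selection. The only change is that, within the accepted sample, $T \coloneqq T^{\text{synth}} = T^{\text{rct}}$ with $\P(T=1\mid \vect O,\vect U,Z, Y(\cdot), A)=p_t(\vect O,\vect U,Z)$.

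\textbf{Causal effect preservation.} This follows directly from the display. Since $A$ is independent of $(Y(0),Y(1))$, we get $\E[Y(1)-Y(0)\mid A]=\E[Y(1)-Y(0)]$. RCT randomization and consistency then give $\E[Y(t)]=\E[Y\mid T^{\text{rct}}=t]$.

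\textbf{IV validity.} Here I verify the four parts of \cref{assump:iv} in the accepted population, defining potential treatments $T(z)\sim\bernoulli(p_t(\vect O,\vect U,z))$ from the same noise used for $T^{\text{synth}}$.
\begin{itemize}[leftmargin=*]
\item \emph{Consistency.} $T=T(Z)$ holds by construction. $Y=Y(T)$ holds because the recorded outcome is $Y(T^{\text{rct}})=Y(T^{\text{synth}})$ on $A$.
\item \emph{Exclusion.} The original $Y$ is generated from $T^{\text{rct}}$ and $\vect X$ only, and $Z$ never enters its mechanism, so $Y(z,t)=Y(t)$.
\item \emph{Positivity.} Because the law of $(\vect O, Z)$ is unchanged under $A$, $\P(Z=1\mid \vect O, A)=p_z(\vect O)\in[\epsilon',1-\epsilon']$.
\item \emph{Instrument unconfoundedness.} Before selection, $Z\indep (\vect U, Y(0),Y(1))\mid \vect O$ because $Z$ uses only $\vect O$ and independent noise. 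Acceptance is independent of all these variables, so the same holds after conditioning on $A$. This gives $Z\indep Y(t)\mid \vect O$, with $\vect O$ playing the role of the observed covariates.
\end{itemize}
Unconfoundedness of $Z$ should also be stated jointly with $(T(0),T(1))$, which holds since the noise in $T(z)$ is independent of $Z$ given $(\vect O,\vect U)$, and $\vect U$ is independent of $Z$ given $\vect O$. The condition $p_t\in[\epsilon,1-\epsilon]$ is only needed to make the acceptance event nondegenerate and the treatment nondeterministic. The acceptance rate is exactly $1/2$ regardless.

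\textbf{Main obstacle.} The hard step is the conditional-independence bookkeeping under selection. One must make sure that conditioning on $A$, which involves $T^{\text{synth}}$ and hence $\vect U$ and $Z$, does not open a collider path between $Z$ and $(\vect U, Y(\cdot))$. The displayed identity resolves this: the conditional acceptance probability is constant in every variable. More precisely, $\P(A\mid\cdot)=1/2$ even after also conditioning on $T^{\text{synth}}$, since $\P(T^{\text{rct}}=T^{\text{synth}}\mid T^{\text{synth}},\dots)=1/2$. So $A$ is fully independent of $(\vect X,Z,T^{\text{synth}},Y(\cdot))$, and the selection is ignorable. The balanced-assignment hypothesis is exactly what makes this work.
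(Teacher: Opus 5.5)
Your proposal is correct and follows the paper's proof: both rest on computing $\P(T^{\text{synth}} = T^{\text{rct}} \mid \vect O, \vect U, Z) = \tfrac12 p_t + \tfrac12(1-p_t) = \tfrac12$. From this, both conclude that the accepted units form a uniform random subsample, so $\E[Y(t)\mid T^{\text{synth}} = T^{\text{rct}}] = \E[Y(t)]$, and then invoke RCT randomization. Your handling of IV validity is more careful than the paper's, which argues unconfoundedness and exclusion only from the pre-selection construction. You instead transport those independences to the accepted sample, closing the collider path through $T^{\text{synth}}$, and you also check consistency, positivity, and joint independence of $Z$ from $(T(0),T(1))$.
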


Note that while our target estimand is the SATE, \cref{prop:rct-to-iv} establishes preservation of the PATE. However, the two are connected: the SATE is itself a Monte Carlo estimator of the PATE over the covariate distribution, and the RCT PATE (our ground-truth label) is estimated via its own Monte Carlo average over the available RCT samples. Both sources of approximation vanish as sample size grows and remain practically reliable for the RCTs we use. 

\xhdr{Why this is a useful benchmark}
Our procedure decouples the components of an IV benchmark by their realism. In particular, the entire outcome distribution is preserved exactly: accepted units retain their original RCT outcomes, so we evaluate against the real, complex outcome-generating mechanism (e.g., job-training effects on earnings) rather than a simulated one. What we synthesize is only the treatment/instrument assignment structures, which are typically simpler than the outcome process. The procedure should be viewed not as a replacement for real IV data, but as a controlled bridge between toy simulators and real IV settings without ground-truth.

\section{Experiments}
\label{sec:experiments}
\setlength{\textfloatsep}{6pt plus 2pt minus 2pt}
\setlength{\intextsep}{6pt plus 2pt minus 2pt}
\setlength{\abovecaptionskip}{3pt}
\setlength{\belowcaptionskip}{0pt}
\xhdr{Baselines} We choose the following standard baselines for IV.

\emph{(i) Plug-in Estimator}: The plug-in estimator from \cref{sec:background}, with TabPFN-v2 as the base classifier; requires the Balke-Pearl equations.
 
\emph{(ii) Nonparametric Efficient Estimators}: The efficient frequentist methods of \citet{levis2025,whitehouse2025inference}, which requires the Balke-Pearl bounds; we use TabPFN-v2 as the base model. 

\emph{(iii) Bayes-BP (Bayesian Inference over the Bounds)}: Bayesian inference on the conditional probabilities $p_{yt.z}(\vect x)$ under a GP prior; we sample posteriors, evaluate the Balke-Pearl equations on each sample, and average. This baseline isolates epistemic uncertainty in the identifiable $p(Y, T \mid Z, \vect X)$~\citep{richardson2011transparent,silva2016causal}, but still requires the closed-form bounds and per-dataset optimization.
 
\emph{(iv) Bayes (Pure Bayesian Inference)}: The fully Bayesian approach discussed in \cref{sec:bayesian} that directly infers the posterior on the SATE without using the Balke-Pearl equations. We follow the factorization in \cref{eq:joint_distribution} and use a GP prior. This baseline mirrors the formulation \method\ amortizes with a different prior, allowing us to isolate the effect of replacing MCMC and variational inference with amortized in-context learning, and the GP prior with the one discussed in \cref{sec:prior-generation}. For both Bayesian baselines we use sparse variational inference for scalability~\citep{titsias2009variational,gardner2018gpytorch}.

\xhdr{Quantile levels} The Bayes baseline and \method\ both return a marginal posterior over the SATE; we convert this posterior into a bound interval by taking the $\alpha/2$ and $1-\alpha/2$ quantiles as in \cref{eq:low-up-interval}. We use $\alpha=0.01$ for both. For Bayes-BP, which targets the lower/upper bounds directly, we use $\alpha = 0.1$.

\xhdr{Evaluation Metrics} We evaluate methods along three axes: \emph{validity}, \emph{informativeness}, and \emph{efficiency}. All the metrics are averaged over 10 random seeds, and we report mean $\pm$ standard error.
 
\emph{(i) Validity}: When the true bounds $[\lowerbound^\star, \upperbound^\star]$ are known---only in the fully synthetic benchmark where the complete DGP is available---we report a binary indicator that equals 1 if the estimated interval contains the true interval, i.e., $\hat{\lowerbound} \le \lowerbound^\star$ and $\hat{\upperbound} \ge \upperbound^\star$. For RCT-derived benchmarks, the sharp IV bounds are not known; the RCT provides an estimated causal-effect label, but not the induced IV identified set. We therefore report a binary indicator that equals 1 if $\lambda^\star \in [\hat{\lowerbound}, \hat{\upperbound}]$. The distinction will be clear based on the benchmark. \emph{(ii) Informativeness (Width)}: We report the normalized width of the estimated interval divided by the width of the Manski natural bounds~\citep{manski1990nonparametric} as the measure of informativeness/tightness. Note that the width of the Manski bounds is always $w = (\max(Y) - \min(Y))$. Hence, our informative measure is defined as $(\hat{\upperbound} - \hat{\lowerbound})/w$. A tighter estimated interval ($\ll 1$) indicates the method extracts more information from the IV structure.  \emph{(iii) Efficiency}:  We record the inference time in seconds per $1{,}000$ samples of the inference dataset to measure efficiency.
 
Note that an interval estimator is only useful if it is valid, i.e., it contains the true causal quantity. Informativeness and efficiency metrics are mostly meaningful only after validity is achieved; a narrow interval that excludes the truth can lead to systematically incorrect decisions.
 
\xhdr{Datasets}
We evaluate \method\ on fully synthetic and semi-synthetic benchmarks. Below, we discuss their high-level construction. For details like the characterization of the underlying DGPs, see \cref{appx:benchmarks}. We also consider a modified version of the airplane demand dataset~\citep{hartford2017deep} in \cref{appx:airplane}. 
 
\emph{Binary Outcome Benchmark (Synthetic)}: We construct a synthetic IV benchmark with binary instrument, treatment, and outcome. Covariates $\vect X \in \R^d$ are sampled with $d \sim \text{Unif}(5, 10)$. The joint distribution of potential treatments and outcomes is parameterized via principal strata~\citep{frangakis2002principal}, with strata probabilities generated as softmax transformations of linear functions of the covariates. The instrument $Z$ is generated as a function of covariates with added exogenous noise. This construction allows us to compute the exact Balke-Pearl bounds, which we use as ground truth for evaluation.
 
\emph{Jobs (Semi-Synthetic)}: 
The National Supported Work (NSW) Demonstration is a randomized job-training program designed to evaluate the causal effect of training on subsequent earnings~\citep{lalonde1986evaluating}. We apply the RCT-to-IV procedure in \cref{sec:rct_to_iv_benchmark} to convert the dataset into a suitable IV benchmark. We use log-transformed 1978 earnings as the outcome. Observed covariates include demographics (age, education, race, marital status), while pre-program earnings (1974, 1975) serve as hidden confounders. We also vary the instrument strength, controlling the correlation between the instrument and treatment $\rho(Z, T)$, creating ``weak'' ($\rho$ small) and ``strong'' ($\rho$ large) instrument variants.
 
\emph{STAR (Semi-Synthetic)}: 
Project STAR is a large-scale RCT conducted to measure the effect of class size on early-grade academic achievement~\citep{mosteller1995tennessee,aerpackage}. We apply the RCT-to-IV procedure discussed in \cref{sec:rct_to_iv_benchmark} and construct an entry-grade snapshot with grade-standardized test scores (math or reading) as outcomes. Observed covariates include student demographics and teacher credentials, while school-level factors (free lunch eligibility, school type) serve as hidden confounders. We consider two treatment contrasts: \emph{small vs.\ regular} and \emph{regular+aide vs.\ regular}, each with weak and strong instrument variants. Here, we present the results for the small vs. regular class comparison with reading scores as outcome. For other variants, see \cref{appx:star-full}.

For RCT-derived benchmarks, we balance the treatment assignment by randomly subsampling the larger arm to match the smaller one in size. This satisfies the required condition of \cref{prop:rct-to-iv}.
 
\subsection{Results}
\label{sec:results}
 
\begin{table}[t]
\centering
\setlength{\tabcolsep}{4pt}\renewcommand{\arraystretch}{0.95} 
\captionsetup{font=footnotesize}
\caption{Results on the synthetic binary-outcome benchmark (\emph{left}; ground-truth bounds known) and the semi-synthetic Jobs benchmark (\emph{right}; RCT-derived labels available, aggregated across weak and strong instruments). Best validity and time in \textbf{bold}; best width among perfectly valid methods is \underline{underlined}. Mean $\pm$ ste over seeds.}
\label{tab:binary-and-jobs}
\begin{adjustbox}{max width=0.85\textwidth}
\begin{tabular}{lccc|ccc}
\toprule
& \multicolumn{3}{c|}{\textbf{Binary-outcome (synthetic)}} & \multicolumn{3}{c}{\textbf{Jobs (semi-synthetic)}} \\
\cmidrule(lr){2-4}\cmidrule(lr){5-7}
\textbf{Method} & \textbf{Validity}$\uparrow$ & \textbf{Width}$\downarrow$ & \textbf{Time/1k (s)}$\downarrow$ & \textbf{Validity}$\uparrow$ & \textbf{Width}$\downarrow$ & \textbf{Time/1k (s)}$\downarrow$ \\
\midrule
\method\ (Ours) & \textbf{1.00 {\footnotesize $\pm$ 0.00}} & \underline{0.609 {\footnotesize $\pm$ 0.158}} & \textbf{0.18 {\footnotesize $\pm$ 0.01}} & \textbf{1.00 {\footnotesize $\pm$ 0.00}} & \underline{0.439 {\footnotesize $\pm$ 0.040}} & \textbf{1.77 {\footnotesize $\pm$ 0.29}} \\
\midrule
Plug-in & 0.50 {\footnotesize $\pm$ 0.17} & 0.530 {\footnotesize $\pm$ 0.147} & 1.18 {\footnotesize $\pm$ 0.26} & 0.70 {\footnotesize $\pm$ 0.09} & 0.541 {\footnotesize $\pm$ 0.029} & 68.5 {\footnotesize $\pm$ 17.0} \\
Whitehouse et al. & \textbf{1.00 {\footnotesize $\pm$ 0.00}} & 0.641 {\footnotesize $\pm$ 0.158} & 2.15 {\footnotesize $\pm$ 0.35} & 0.90 {\footnotesize $\pm$ 0.07} & 0.836 {\footnotesize $\pm$ 0.040} & 92.9 {\footnotesize $\pm$ 10.3} \\
Levis et al. & 0.90 {\footnotesize $\pm$ 0.10} & 0.575 {\footnotesize $\pm$ 0.147} & 2.30 {\footnotesize $\pm$ 0.35} & 0.60 {\footnotesize $\pm$ 0.07} & 0.545 {\footnotesize $\pm$ 0.053} & 92.6 {\footnotesize $\pm$ 16.0} \\
Bayes-BP & 0.70 {\footnotesize $\pm$ 0.15} & 0.588 {\footnotesize $\pm$ 0.113} & 6.35 {\footnotesize $\pm$ 0.32} & 0.90 {\footnotesize $\pm$ 0.10} & 0.429 {\footnotesize $\pm$ 0.089} & 136.46 {\footnotesize $\pm$ 16.09} \\
Bayes & 0.00 {\footnotesize $\pm$ 0.00} & 0.169 {\footnotesize $\pm$ 0.038} & 8.82 {\footnotesize $\pm$ 0.35} & 0.25 {\footnotesize $\pm$ 0.14} & 0.229 {\footnotesize $\pm$ 0.045} & 254.55 {\footnotesize $\pm$ 19.93} \\
\bottomrule
\end{tabular}%
\end{adjustbox}
\end{table}
 
\begin{table}[t]
\centering
\setlength{\tabcolsep}{4pt}\renewcommand{\arraystretch}{0.95} 
\captionsetup{font=footnotesize}
\caption{Results on STAR (small vs.\ regular class size, reading scores), stratified by instrument strength.}
\label{tab:star-small-regular-reading}
\begin{adjustbox}{max width=0.85\textwidth}
\begin{tabular}{lccc|ccc}
\toprule
& \multicolumn{3}{c|}{\textbf{Weak instrument} $\rho(Z, T) \approx 0.29$} & \multicolumn{3}{c}{\textbf{Strong instrument} $\rho(Z, T) \approx 0.89$} \\
\cmidrule(lr){2-4}\cmidrule(lr){5-7}
\textbf{Method} &
\textbf{Validity}$\uparrow$ & \textbf{Width}$\downarrow$ & \textbf{Time/1k (s)}$\downarrow$ &
\textbf{Validity}$\uparrow$ & \textbf{Width}$\downarrow$ & \textbf{Time/1k (s)}$\downarrow$ \\
\midrule
\method\ (Ours) & \textbf{1.00 {\footnotesize $\pm$ 0.00}} & \underline{0.232 {\footnotesize $\pm$ 0.041}} & \textbf{0.17 {\footnotesize $\pm$ 0.01}} & \textbf{1.00 {\footnotesize $\pm$ 0.00}} & \underline{0.044 {\footnotesize $\pm$ 0.002}} & \textbf{0.18 {\footnotesize $\pm$ 0.03}} \\
\midrule
Plug-in & \textbf{1.00 {\footnotesize $\pm$ 0.00}} & 0.661 {\footnotesize $\pm$ 0.017} & 9.51 {\footnotesize $\pm$ 1.34}
        & \textbf{1.00 {\footnotesize $\pm$ 0.00}} & 0.103 {\footnotesize $\pm$ 0.009} & 8.05 {\footnotesize $\pm$ 0.99} \\
Whitehouse et al. & \textbf{1.00 {\footnotesize $\pm$ 0.00}} & 0.817 {\footnotesize $\pm$ 0.017} & 15.6 {\footnotesize $\pm$ 1.9}
                  & \textbf{1.00 {\footnotesize $\pm$ 0.00}} & 0.149 {\footnotesize $\pm$ 0.013} & 16.5 {\footnotesize $\pm$ 2.7} \\
Levis et al. & \textbf{1.00 {\footnotesize $\pm$ 0.00}} & 0.692 {\footnotesize $\pm$ 0.018} & 17.5 {\footnotesize $\pm$ 2.9}
             & \textbf{1.00 {\footnotesize $\pm$ 0.00}} & 0.110 {\footnotesize $\pm$ 0.011} & 16.0 {\footnotesize $\pm$ 1.9} \\
Bayes-BP & \textbf{1.00 {\footnotesize $\pm$ 0.00}} & 0.711 {\footnotesize $\pm$ 0.015} & 81.72 {\footnotesize $\pm$ 5.72} & \textbf{1.00 {\footnotesize $\pm$ 0.00}} & 0.220 {\footnotesize $\pm$ 0.013} & 84.31 {\footnotesize $\pm$ 5.92} \\
Bayes  & 0.90 {\footnotesize $\pm$ 0.11} & 0.084 {\footnotesize $\pm$ 0.005} & 175.69 {\footnotesize $\pm$ 24.57} & \textbf{1.00 {\footnotesize $\pm$ 0.00}} & 0.148 {\footnotesize $\pm$ 0.005} & 170.35 {\footnotesize $\pm$ 19.03} \\
\bottomrule
\end{tabular}%
\end{adjustbox}
\vspace{-0.1cm}
\end{table}
 
\xhdr{Synthetic and Jobs benchmarks} \cref{tab:binary-and-jobs} reports validity, width, and per-1k-sample inference time on the synthetic benchmark (where ground-truth bounds are known) and the semi-synthetic Jobs benchmark (where an RCT-derived causal-effect label is available via the RCT-to-IV procedure). On both benchmarks, \method\ is the only method that attains $1.00$ validity \emph{and} the tightest valid intervals: on Jobs, every other baseline either under-covers. The Bayes baseline, the same fully-Bayesian formulation \method\ amortizes but with variational inference, collapses to validity $0.00$ on the synthetic benchmark and $0.25$ on Jobs, with severely under-dispersed widths. This is the predicted consequence of mode-seeking exclusive KL discussed in \cref{sec:implementing} and confirms the importance of the forward-KL training objective. \method\ is also the fastest method by $2$--$140\times$ across both benchmarks.
 
\xhdr{STAR benchmark} \cref{tab:star-small-regular-reading} reports results on STAR (small vs.\ regular, reading scores). Most methods attain perfect validity in both weak- and strong-instrument regimes, so the comparison reduces to width. \method\ produces the tightest valid intervals---$2$--$3\times$ tighter than the second-best method---and is the fastest, by $20$--$500\times$. Additional STAR variants (regular+aide vs.\ regular; math scores) and the airplane-demand benchmark appear in \cref{appx:star-full,appx:airplane}.
\begin{wrapfigure}{r}{0.45\linewidth}
\centering
\captionsetup{font=footnotesize}
\includegraphics[width=0.95\linewidth]{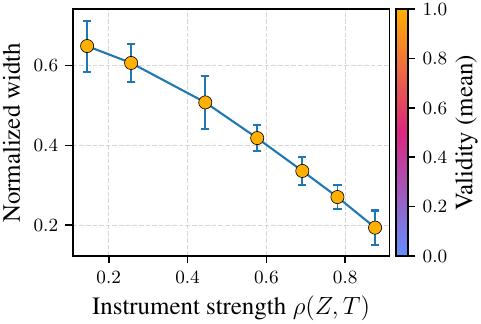}
\caption{\textbf{Effect of instrument strength on interval width} (Jobs benchmark). Width decreases as $\rho(Z,T)$ increases; validity remains perfect.}
\label{fig:instrument-sweep}
\vspace{-1em}
\end{wrapfigure}
 
\xhdr{Effect of instrument strength} \cref{fig:instrument-sweep} shows that as the instrument strength $\rho(Z, T)$ increases on the Jobs benchmark, \method's interval narrows monotonically while validity remains at $1.00$ across the full range. This is a sanity check to see that a stronger instrument indeed shrinks the identified set, and \method's mass-covering posterior contracts accordingly.

\xhdr{Calibration and sensitivity analysis} Beyond the validity metric reported above, \cref{appx:calibration-sensitivity} provides a fuller picture of \method's interval estimator. Calibration curves (\cref{appx:calibration}) show that empirical coverage exceeds the nominal level across all credibility levels on three synthetic DGPs, which is consistent with the conservative over-coverage induced by the inclusive KL objective. A sensitivity analysis (\cref{appx:sensitivity}) further shows that coverage always remains perfect at 1.00, while interval width contracts monotonically with sample size and inflates modestly with covariate dimensionality. 
\section{Conclusion and Limitations}
\label{sec:conclusion}
We introduced \method, a Bayesian in-context learning approach for partial identification under instrumental variables. By learning the marginal posterior of the sample average treatment effect (SATE) directly and minimizing an inclusive KL training objective, \method\ addresses five key limitations of partial identification under IV (\textbf{L1}--\textbf{L5}). Our experiments demonstrate that \method\ achieves superior validity--informativeness trade-offs compared to existing baselines with orders-of-magnitude faster inference. Beyond the methodological contribution, we introduced a prior data generation procedure that produces a diverse family of IV-consistent synthetic datasets with full support over the SATE that enabled large-scale pre-training. We also proposed a novel procedure to convert randomized controlled trial data into realistic IV benchmarks with preserved RCT causal-effect labels. This is useful for evaluating partial identification methods on realistic data.

Several limitations remain. First, while \method\ achieves perfect validity across our benchmarks, valid intervals are not formally guaranteed under arbitrary prior--data mismatch. 
The inclusive KL objective enforces mass-covering only with respect to the prior used at training time; downstream coverage on real data depends on the prior assigning sufficient mass near the true DGP. We empirically probed this dependence through calibration curves and analyzing \method's sensitivity to sample-size and covariate size, but achieving exact frequentist calibration in amortized methods remains an open problem~\citep{mourao2026prior-data,melnychuk2026frequentist}. Second, although \method\ is architecturally agnostic to outcome type, the current implementation samples binary outcomes during pretraining and discretizes continuous outcomes at inference time. We leave the implementation of the fully continuous setting to future work.

\bibliographystyle{plainnat}
\bibliography{main}

\clearpage
\appendix

\section*{Appendix Contents}
\addcontentsline{toc}{section}{Appendix Contents}
\startcontents[sections]
\printcontents[sections]{}{1}{\setcounter{tocdepth}{2}}
\clearpage

\crefalias{section}{appendix}
\crefalias{subsection}{appendix}
\Crefname{appendix}{Appendix}{Appendices}
\crefname{appendix}{appendix}{appendices}

\section{Balke-Pearl Equations}
\label{appx:balke-pearl-eqs}
 
The closed-form sharp bounds of \citet{balke1994counterfactual,balke1997bounds} for the standard binary IV model arise from a constrained linear program whose decision variables are the probabilities of the sixteen response-function strata for a binary $(T, Y)$ pair under a binary instrument $Z$. The exclusion restriction and consistency (\cref{assump:iv}) impose linear equality constraints between these strata probabilities and the observational distribution $p_{yt.z}(\vect x) = \P(Y = y, T = t \mid Z = z, \vect X = \vect x)$ for $y, t, z \in \{0, 1\}$. Maximizing or minimizing the linear functional $\E[Y(1) - Y(0) \mid \vect X = \vect x]$ subject to these constraints yields, by LP duality, sharp bounds expressible as the maximum of eight linear lower-bound expressions $\phi_{l,j}$ and the minimum of eight linear upper-bound expressions $\phi_{u,j}$, each a different linear combination of the eight conditional probabilities $\{p_{yt.z}(\vect x)\}$. Each $\phi_{l,j}$ corresponds to a worst-case allocation of mass across the unidentifiable response-function strata under one extremum of the LP, with an analogous reading for $\phi_{u,j}$. The bounds are pointwise sharp~\citep{balke1997bounds}: there exist data-generating processes consistent with $\{p_{yt.z}(\vect x)\}$ at which the conditional SATE attains each endpoint. \citet{levis2025} extend sharpness from the conditional SATE to the population SATE. The eight expressions for the lower and upper conditional bounds are:
 
\begin{align}
\phi_{l,1}(\vect x) &= p_{11.1}(\vect x) + p_{00.0}(\vect x) - 1 \nonumber \\
\phi_{l,2}(\vect x) &= p_{11.0}(\vect x) + p_{00.1}(\vect x) - 1 \nonumber \\
\phi_{l,3}(\vect x) &= -p_{01.1}(\vect x) - p_{10.1}(\vect x) \nonumber \\
\phi_{l,4}(\vect x) &= -p_{01.0}(\vect x) - p_{10.0}(\vect x) \nonumber \\
\phi_{l,5}(\vect x) &= p_{11.0}(\vect x) - p_{11.1}(\vect x) - p_{10.1}(\vect x) - p_{01.0}(\vect x) - p_{10.0}(\vect x)  \\
\phi_{l,6}(\vect x) &= p_{11.1}(\vect x) - p_{11.0}(\vect x) - p_{10.0}(\vect x) - p_{01.1}(\vect x) - p_{10.1}(\vect x) \nonumber \\
\phi_{l,7}(\vect x) &= p_{00.1}(\vect x) - p_{01.1}(\vect x) - p_{10.1}(\vect x) - p_{01.0}(\vect x) - p_{00.0}(\vect x) \nonumber \\
\phi_{l,8}(\vect x) &= p_{00.0}(\vect x) - p_{01.0}(\vect x) - p_{10.0}(\vect x) - p_{01.1}(\vect x) - p_{00.1}(\vect x) \nonumber \\[10pt]
\phi_{u,1}(\vect x) &= 1 - p_{01.1}(\vect x) - p_{10.0}(\vect x) \nonumber \\
\phi_{u,2}(\vect x) &= 1 - p_{01.0}(\vect x) - p_{10.1}(\vect x) \nonumber \\
\phi_{u,3}(\vect x) &= p_{11.1}(\vect x) + p_{00.1}(\vect x) \nonumber \\
\phi_{u,4}(\vect x) &= p_{11.0}(\vect x) + p_{00.0}(\vect x) \nonumber \\
\phi_{u,5}(\vect x) &= -p_{01.0}(\vect x) + p_{01.1}(\vect x) + p_{00.1}(\vect x) + p_{11.0}(\vect x) + p_{00.0}(\vect x) \\
\phi_{u,6}(\vect x) &= -p_{01.1}(\vect x) + p_{11.1}(\vect x) + p_{00.1}(\vect x) + p_{01.0}(\vect x) + p_{00.0}(\vect x) \nonumber \\
\phi_{u,7}(\vect x) &= -p_{10.1}(\vect x) + p_{11.1}(\vect x) + p_{00.1}(\vect x) + p_{11.0}(\vect x) + p_{10.0}(\vect x) \nonumber \\
\phi_{u,8}(\vect x) &= -p_{10.0}(\vect x) + p_{11.0}(\vect x) + p_{00.0}(\vect x) + p_{11.1}(\vect x) + p_{10.1}(\vect x) \nonumber
\end{align}
 
In \method, these equations are used \emph{only} to compute ground-truth bounds for synthetic benchmarks; they do not appear in the training/inference pipeline of the main \method\ method, which operates directly on the marginal posterior of the SATE.

\section{Inclusive KL Equivalence}
\label{appx:forward-kl}

We derive the equivalence stated in \cref{eq:forward-kl-equivalence}. Recall the loss
\begin{equation*}
    \cL(\theta) = \E_{\psi \sim \pi,\, \tilde{\data} \sim P_\psi^{\obs}} \lbar -\log q_\theta\lpar \lambda(\psi; \tilde{\data}) \mid \tilde{\data}\rpar \rbar.
\end{equation*}
By the law of total expectation,
\begin{equation*}
    \cL(\theta) = \E_{\tilde{\data} \sim P^{\obs}} \!\E_{\psi \sim \pi(\cdot \mid \tilde{\data})} \lbar -\log q_\theta\lpar \lambda(\psi; \tilde{\data}) \mid \tilde{\data}\rpar \rbar,
\end{equation*}
where we have used Bayes' rule to swap the joint sampling $(\psi, \tilde{\data}) \sim \pi \cdot P_\psi^{\obs}$ for the marginal-conditional decomposition $\tilde{\data} \sim P^{\obs}$ and $\psi \mid \tilde{\data} \sim \pi(\cdot \mid \tilde{\data})$. The inner expectation is
\begin{equation*}
    \E_{\psi \sim \pi(\cdot \mid \tilde{\data})} \lbar -\log q_\theta\lpar \lambda(\psi; \tilde{\data}) \mid \tilde{\data}\rpar \rbar = \E_{b \sim \pi_{\text{sate}}(\cdot \mid \tilde{\data})} \lbar -\log q_\theta\lpar b \mid \tilde{\data}\rpar \rbar,
\end{equation*}
where the second equality follows from the definition of $\pi_{\text{sate}}$ as the pushforward of $\pi(\cdot \mid \tilde{\data})$ through $\psi \mapsto \lambda(\psi; \tilde{\data})$. Using the cross-entropy decomposition,
\begin{equation*}
    \E_{b \sim \pi_{\text{sate}}(\cdot \mid \tilde{\data})} \lbar -\log q_\theta\lpar b \mid \tilde{\data}\rpar \rbar = H\lpar \pi_{\text{sate}}(\cdot \mid \tilde{\data}) \rpar + \kl{\pi_{\text{sate}}(\cdot \mid \tilde{\data})}{q_\theta(\cdot \mid \tilde{\data})},
\end{equation*}
where $H(\cdot)$ is the entropy. The entropy term does not depend on $\theta$, so
\begin{equation*}
    \arg\min_\theta \cL(\theta) = \arg\min_\theta \, \E_{\tilde{\data} \sim P^{\obs}} \lbar \kl{\pi_{\text{sate}}(\cdot \mid \tilde{\data})}{q_\theta(\cdot \mid \tilde{\data})}\rbar,
\end{equation*}
which is precisely the expected inclusive KL between the true marginal posterior and our approximation. The inclusive KL is mass-covering in the standard sense: any region with $\pi_{\text{sate}}$-mass for which $q_\theta$ assigns zero density contributes infinite penalty, so the optimum $q_{\theta^\star}$ must support the entire support of $\pi_{\text{sate}}$. \qed

\section{Training Details and Inference}
\label{appx:training-infer-ate} 
\xhdr{Output level} The averaged representation is projected through a linear head producing a $K$-dimensional vector ($K = 1{,}024$) of logits over a discretized output space. Since $Y \in [0, 1]$, the SATE lies in $[-1, 1]$; we partition $[-1, 1]$ into $K$ uniform bins and apply softmax to obtain a probability distribution $q_\theta([k] \mid \tilde{\data})$ over bin indices $k \in \{1, \ldots, K\}$.
 
\xhdr{Inferring Bounds} At inference time, given a new observational dataset $\data$, the trained model $q_{\theta^\star}$ produces an approximation to $\pi_{\text{sate}}(\cdot \mid \data)$ in a single forward pass. We read off the bound as $\lbar Q_{\alpha/2}\lpar q_{\theta^\star}(\cdot \mid \data)\rpar,\; Q_{1-\alpha/2}\lpar q_{\theta^\star}(\cdot \mid \data)\rpar\rbar$. Inference is one forward pass per dataset, resolving \textbf{L5}. 

\xhdr{Training Process} We train $\theta$ by performing gradient descent updates with the AdamW optimizer~\citep{loshchilov2017decoupled} on a Monte-Carlo estimate of the loss in \cref{eq:loss-fnc}. At each iteration, we sample a batch of parameters $\psi_1, \ldots, \psi_B \sim \pi$. For each $\psi_b$, we generate an observational dataset $\tilde{\data}_b \sim P_{\psi_b}^\obs$ and compute the ground-truth SATE $\lambda(\psi_b; \tilde{\data}_b)$ analytically. We discretize the SATE values into uniformly spaced bins on $[-1, 1]$ via the mapping $w \mapsto \lfloor K \times (w + 1)/2 \rfloor$ to obtain $[\hat{\lambda}(\psi_b; \tilde{\data}_b)]$ (clipped to be at most $K - 1$). The estimated loss is
\begin{equation}
\label{eq:loss-fnc-estimate}
    \widehat{\cL}(\theta) \coloneqq - \frac{1}{B}\sum_{b=1}^B \log q_\theta\lpar [\hat{\lambda}(\psi_b; \tilde{\data}_b)] \mid \tilde{\data}_b \rpar.
\end{equation}
We minimize $\widehat{\cL}$ for a sufficient number of steps until the loss converges. We denote the trained parameters by $\theta^\star$.
 
\xhdr{Inferring SATE Bounds} Once trained, the transformer model can take new unseen datasets $\data$ and produce $q_{\theta^\star}(\cdot \mid \data) \approx \pi_{\text{sate}}(\cdot \mid \data)$ in a single forward pass without further training. We then read off the bound interval $\lbar Q_{\alpha/2}\lpar q_{\theta^\star}(\cdot \mid \data)\rpar,\; Q_{1-\alpha/2}\lpar q_{\theta^\star}(\cdot \mid \data)\rpar \rbar$ via \cref{eq:low-up-interval}, computing the quantiles directly on the discretized marginal posterior.
 
\xhdr{Hyperparameters} We list the training hyperparameters in \cref{tab:hyperparams}. The transformer architecture is a standard encoder with self-attention layers and MLPs, omitting positional encoding to enforce permutation-invariance over dataset rows. The total pre-training cost is $69.5$ GPU-hours on one H100. Once trained, inference takes under $2$ seconds per dataset, so the pre-training cost amortizes after roughly $1{,}000$--$3{,}000$ inference calls---a threshold readily reached when \method\ is deployed as a general-purpose tool.
 
\begin{table}[ht]
\centering
\caption{Training hyperparameters for \method.}
\label{tab:hyperparams}
\begin{tabular}{ll}
\toprule
\textbf{Hyperparameter} & \textbf{Value} \\
\midrule
Batch size & 256 \\
Iterations & 262{,}144 \\
Number of synthetic tables & 67{,}108{,}864 \\
Output bins ($K$) & 1{,}024 \\
Optimizer & AdamW \\
{Learning rate} & 1e-4 \\
{Weight decay} & 0.05 \\
{Embedding dimension} & 384 \\
{Transformer depth} & 20 \\
{Number of attention heads} & 6 \\
Total training time (one H100 GPU) & 69.5 hrs \\
\bottomrule
\end{tabular}
\end{table}

\section{Proof of Proposition~\ref{prop:rct-to-iv}}
\label{appx:proof-rct-to-iv}
 
\begin{proof}
We prove each claim in turn.
 
\textbf{Part 1: IV Validity.}
 
\emph{Instrument unconfoundedness:} By construction, $Z$ is generated as $Z_i \sim \text{Bernoulli}(p_z(\vect O_i))$, depending only on observed covariates $\vect O$. Since $Z$ does not depend on $\vect U$, we have
\begin{align}
    Z \indep \vect U \mid \vect O.
\end{align}
Moreover, the RCT outcomes $Y(t)$ are pre-determined and do not depend on the synthetically generated $Z$. Since the original RCT satisfies $T^{\text{rct}} \indep Y(t) \mid \vect X = (\vect O, \vect U)$ by randomization, and $Z$ is generated independently of $Y(t)$ given $\vect O$, we have $Z \indep Y(t) \mid \vect O$.
 
\emph{Exclusion restriction:} The outcome $Y_i$ for each accepted unit is the original RCT outcome, which was generated before and independently of the synthetic instrument $Z$. The instrument $Z$ was only used to generate the synthetic treatment $T^{\text{synth}}$, not the outcome. Therefore, $Z$ has no direct effect on $Y$ other than through its effect on $T$.
 
\textbf{Part 2: SATE Preservation.}
 
Let $\gamma = \P(T^{\text{rct}} = 1) = 0.5$ by the balancing assumption. For any unit with covariates $(\vect O, \vect U, Z)$, define $p := p_t(\vect O, \vect U, Z)$. The acceptance probability is:
{\small
\begin{align}
    \P(T^{\text{synth}} = T^{\text{rct}} \mid \vect O, \vect U, Z)
    &= \P(T^{\text{synth}} = 1, T^{\text{rct}} = 1 \mid \vect O, \vect U, Z) + \P(T^{\text{synth}} = 0, T^{\text{rct}} = 0\mid \vect O, \vect U, Z ) \nonumber \\
    &= p \cdot \gamma + (1-p) \cdot (1-\gamma) =p \cdot 0.5 + (1-p) \cdot 0.5 = 0.5.
\end{align}
}%
Note that this acceptance probability does not depend on $(\vect O, \vect U, Z)$. Therefore, the accepted subsample is a uniform random subsample of the original balanced RCT. Moreover, for the accepted units, the treatment assignment follows:
{\small
\begin{align}
    \P(T = 1 \mid \vect O, \vect U, Z, T^{\text{synth}} = T^{\text{rct}}) &= \frac{\P(T^{\text{synth}} = 1, T^{\text{rct}} = 1 \mid \vect O, \vect U, Z)}{\P(T^{\text{synth}} = T^{\text{rct}} \mid \vect O, \vect U, Z)} = \frac{p \cdot 0.5}{0.5} = p = p_t(\vect O, \vect U, Z).
\end{align}
}%
This confirms that the treatment mechanism in the accepted sample is exactly the intended confounded propensity.
 
Since acceptance is independent of all covariates and potential outcomes, conditioning on acceptance does not change the distribution of potential outcomes:
{\small
\begin{align}
    \E[Y(t) \mid T^{\text{synth}} = T^{\text{rct}}] = \E[Y(t)].
\end{align}
}%
Therefore, the SATE in the accepted sample equals the SATE in the original RCT:
{\small
\begin{align}
    \E[Y(1) - Y(0) \mid T^{\text{synth}} = T^{\text{rct}}] &= \E[Y(1) \mid T^{\text{synth}} = T^{\text{rct}}] - \E[Y(0) \mid T^{\text{synth}} = T^{\text{rct}}] \\
    &= \E[Y(1)] - \E[Y(0)] \\
    &= \E[Y \mid T^{\text{rct}} = 1] - \E[Y \mid T^{\text{rct}} = 0],
\end{align}
}%
where the last equality uses the unconfoundedness of the original RCT.
\end{proof}
 
\section{Details of the Benchmark}
\label{appx:benchmarks}
 
\subsection{Details of the Synthetic Binary Outcome Benchmark}
 
We generate 10 synthetic IV datasets with binary instrument, treatment, and outcome, containing $n=2048$ samples:
 
\begin{enumerate}[leftmargin=*]
    
    \item \textbf{Covariates:} Sample $\vect X \in \R^{n \times d}$ with $d \sim \text{Unif}\{5, 6, 7, 8, 9, 10\}$, where each entry is drawn from either $\mathcal{N}(5, 1)$ or $\text{Unif}(-10, 5)$, chosen randomly.
    
    \item \textbf{Instrument generation:} Generate weights $\vect w_Z \in \R^d$ from either $\mathcal{N}(1, 2)$ or $\text{Unif}(-2, 2)$. Compute logits as $\ell_Z = \vect X \vect w_Z + \varepsilon_Z$, where $\varepsilon_Z$ is noise from either $\mathcal{N}(0, 1)$ or $\text{Laplace}(0, 1)$. Standardize: $\tilde{\ell}_Z = (\ell_Z - \bar{\ell}_Z) / \text{std}(\ell_Z)$. Sample $Z \sim \text{Bernoulli}(\sigma(\tilde{\ell}_Z))$.
    
    \item \textbf{Potential treatment/outcome:} Generate weights $\vect W \in \R^{d \times 16}$ and compute logits $\vect L = \vect X \vect W + \vect E$ where $\vect E \in \R^{n \times 16}$ is noise. Apply row-wise softmax to obtain strata probabilities $\vect P \in \R^{n \times 16}$, where each row sums to 1.
    
    \item \textbf{Treatment and outcome strata:} The 16 columns correspond to combinations of:
    \begin{itemize}
        \item Treatment strata: Always-Takers (AT), Never-Takers (NT), Defiers (DE), Compliers (CO)
        \item Outcome strata: Always-Good (AG), Always-Bad (AB), Effective (EF), Harmful (HF)
    \end{itemize}
    
    \item \textbf{Observable generation:} For each unit $i$, sample the stratum from the categorical distribution defined by $\vect P_i$, then determine $(T_i, Y_i)$ based on $Z_i$ and the sampled stratum.
    
    \item \textbf{Ground-truth bounds:} Compute the observational probabilities $p_{yt.z}(\vect x_i)$ analytically from the strata probabilities, then apply the Balke-Pearl equations to obtain $\lowerbound(\vect x_i)$ and $\upperbound(\vect x_i)$.
\end{enumerate}
 
\subsection{Details of the Jobs Benchmark}
 
The original National Supported Work (NSW) Demonstration is an RCT evaluating job training effects on earnings. It includes the following covariates: \texttt{age}, \texttt{education}, \texttt{black}, \texttt{hispanic}, \texttt{married}, \texttt{nodegree}, \texttt{re74}, \texttt{re75}, where the last two features correspond to the earning in 1974 and 1975. The treatment is a binary indicator of assignment to job training program. Finally, the outcome variable is the amount of earnings in 1978 (re78). We apply log transforms to the outputs to get less skewed outcome distribution: $\texttt{re74} \leftarrow \log(\texttt{re74} + 1)$, $\texttt{re75} \leftarrow \log(\texttt{re75} + 1)$, $Y \leftarrow \log(\texttt{re78} + 1)$.
 
\xhdr{Covariate Split} We split the features into observed covariates ($\vect O$), which consist of \texttt{age}, \texttt{education}, \texttt{black}, \texttt{hispanic}, \texttt{married}, \texttt{nodegree}, and unobserved covariates ($\vect U$), which are \texttt{re74}, \texttt{re75} (pre-program earnings). 
 
\xhdr{Synthetic Instrument propensity} We use $p_z(\vect O) = \sigma(s_z(\vect O) + b_z)$, where $s_z(\vect O)$ includes nonlinear effects of the observed demographics and $b_z$ is calibrated such that $\E[Z] \approx 0.5$. In particular, we make sure positivity by clipping $p_z \in [0.05, 0.95]$. 
 
\xhdr{Synthetic Treatment propensity} We use $p_t(\vect O, \vect U, Z) = \sigma(s_t(\vect O, \vect U) + \beta \cdot Z + b_t)$, where $s_t(\vect O, \vect U)$ includes confounding effects through pre-program earnings, and $\beta$ controls instrument strength. We vary $\beta$ from 0.25 to 8 to produce different instrument strengths.
 
\subsection{Details of the STAR Benchmark}
Project STAR randomized students to three class types, \stepindicator{i} Small class ($\approx$ 13--17 students), \stepindicator{ii} Regular class ($\approx$ 22--25 students), and \stepindicator{iii} Regular class with aide ($\approx$ 22--25 students + full-time aide). We create an entry-grade snapshot using the earliest grade with observed class type to convert the longitudinal study into a static dataset.
 
\xhdr{Covariate Split} The observed covariates ($\vect O$) include student's \texttt{gender}, \texttt{birth-quarter}, and \texttt{ethnicity}, as well as \texttt{entry-grade-indicator}. We also consider teacher's \texttt{degree} and \texttt{career-ladder} as observed data. For unobserved variables, we use \texttt{school-type}, \texttt{free-lunch}, and teacher's \texttt{ethnicity} and \texttt{experience}. 
 
\xhdr{Treatment Contrasts} We consider two types of experiments, one with small vs.\ regular class sizes, where $T = 1$ if small class, $T = 0$ if regular class. The other is regular+aide vs. regular classes, with $T = 1$ if regular+aide, $T = 0$ if regular.
 
\xhdr{Outcome Standardization} Finally, we standardize the outcomes, reading and math scores, within the entry grade, i.e., $Y \leftarrow \frac{Y^{\text{raw}} - \mu_G}{\sigma_G}$, where $\mu_G$ and $\sigma_G$ are grade-specific means and standard deviations.
 
\section{Additional Experiments - Airplane Demand Benchmark}
\label{appx:airplane}
We adapt the airplane ticket demand benchmark from \citet{hartford2017deep} to our binary IV setting. The original benchmark simulates airline pricing with unobserved confounding between price and demand. We modify this to have: \stepindicator{i} a binary instrument $Z \sim \text{Bernoulli}(0.5)$, representing high vs.\ low fuel cost regimes, \stepindicator{ii} a binary treatment $T \in \{0, 1\}$, representing regular vs.\ premium pricing, obtained via soft thresholding of the latent continuous price, and \stepindicator{iii} a continuous outcome $Y$ representing ticket sales. The ground-truth SATE is computed analytically from the structural model.
 
The original benchmark models airline ticket pricing and sales as follows. The covariates are $\vect X = (s, \vect e)$ where $s \sim \text{Unif}(0, 10)$ is time-of-year and $\vect e \in \{0,1\}^7$ is a one-hot customer type. The instrument is chosen as $Z \sim \mathcal{N}(0, 1)$ represents fuel cost variation. The treatment value corresponds to the price $P = \frac{\text{sensf}(s)(Z + 3) + 25 + V - \mu_p}{\sigma_p}$ where $V \sim \mathcal{N}(0, \sigma_p^2)$ and $\text{sensf}(\cdot)$ is a seasonal function. Finally, the outcome is defined as $Y = g(s, \vect e, P) + \varepsilon$ where $\varepsilon = \rho \frac{\sigma_y}{\sigma_p} V + \sqrt{1-\rho^2} \eta$ with $\eta \sim \mathcal{N}(0, \sigma_y^2)$. The correlation between $V$ and $\varepsilon$ induces confounding. 
 
We modify the benchmark to have binary instrument and treatment. Specifically, we choose the instrument as $Z \sim \text{Bernoulli}(0.5)$, and map it to standardized proxy using $\tilde{Z} = -(Z - 0.5) / \sqrt{0.25}$. Moreover, we compute $P$ using the same formula as above, but with $\tilde{Z}$ in place of the original Gaussian instrument. We then standardize the price and then apply sigmoid to get our binary treatment. We generate datasets with $n = 2048$ samples.
 
\xhdr{Results} \cref{tab:airplane} shows results on the airplane demand benchmark. All methods except the fully Bayesian method achieve perfect validity, but \method\ produces the tightest bounds (normalized width 0.26 vs.\ 0.62--0.84 for valid baselines). The fully Bayesian method again produces overly narrow intervals (0.13), resulting in 0.90 validity. This reinforces that naively targeting the causal effects directly with exclusive KL variational inference can lead to invalid intervals. \method\ achieves 35--275$\times$ speedup.
\begin{table}[ht]
\centering
\small
\caption{Results on the airplane demand benchmark. We report validity, normalized width, and inference time (mean $\pm$ ste over seeds). Best results in validity and inference time are \textbf{bolded}; The best width for perfectly valid intervals are \underline{underlined}.}
\label{tab:airplane}
\begin{tabular}{lccc}
\toprule
\textbf{Method} & \textbf{Validity} $\uparrow$ & \textbf{Width} $\downarrow$ & \textbf{Time/1k (s)} $\downarrow$ \\
\midrule
\method\ (Ours) & \textbf{1.00 {\footnotesize $\pm$ 0.00}} & \underline{0.257 {\footnotesize $\pm$ 0.097}} & \textbf{0.44 {\footnotesize $\pm$ 0.04}} \\
\midrule
Plug-in & \textbf{1.00 {\footnotesize $\pm$ 0.00}} & 0.625 {\footnotesize $\pm$ 0.032} & {31.7 {\footnotesize $\pm$ 3.6}} \\
Whitehouse et al. & \textbf{1.00 {\footnotesize $\pm$ 0.00}} & 0.844 {\footnotesize $\pm$ 0.033} & 58.8 {\footnotesize $\pm$ 6.0} \\
Levis et al. & \textbf{1.00 {\footnotesize $\pm$ 0.00}} & 0.652 {\footnotesize $\pm$ 0.031} & 61.4 {\footnotesize $\pm$ 6.4} \\
Bayes-BP & \textbf{1.00 {\footnotesize $\pm$ 0.00}} & {0.619 {\footnotesize $\pm$ 0.021}} & 193.9 {\footnotesize $\pm$ 8.5} \\
Bayes & 0.90 {\footnotesize $\pm$ 0.10} & 0.127 {\footnotesize $\pm$ 0.002} & 251.7 {\footnotesize $\pm$ 4.3} \\
\bottomrule
\end{tabular}
\end{table}
 
\section{Additional Experiments - STAR Benchmark}
\label{appx:star-full}
 
We present additional results on the Project STAR benchmark across different treatment contrasts and outcome measures. Tables~\ref{tab:star-regaide-regular-math}--\ref{tab:star-small-regular-math} report results for: (i) Regular+Aide vs.\ Regular class comparison with math scores, (ii) Regular+Aide vs.\ Regular with reading scores, and (iii) Small vs.\ Regular with math scores. The main text presents Small vs.\ Regular with reading scores (Table~\ref{tab:star-small-regular-reading}). Across all variants, \method\ consistently achieves perfect validity while producing competitive interval widths, and maintains significant computational advantages over all baselines.
 
\begin{table*}[t]
\centering
\caption{Results on STAR benchmark comparing Regular+Aide vs.\ Regular class sizes using math scores as outcome. We report validity, normalized interval width, and inference time per 1,000 samples (mean $\pm$ standard error over 10 seeds), stratified by instrument strength. Best validity and time in \textbf{bold}; best width among perfectly valid methods is \underline{underlined}.}
\label{tab:star-regaide-regular-math}
\resizebox{\linewidth}{!}{%
\begin{tabular}{lccc|ccc}
\toprule
& \multicolumn{3}{c|}{\textbf{Weak instrument} $\rho(Z, T) \approx 0.28$} & \multicolumn{3}{c}{\textbf{Strong instrument} $\rho(Z, T) \approx 0.89$} \\
\cmidrule(lr){2-4}\cmidrule(lr){5-7}
\textbf{Method} &
\textbf{Validity} & \textbf{ Width} & \textbf{Time/1k (s)} &
\textbf{Validity} & \textbf{ Width} & \textbf{Time/1k (s)} \\
\midrule
\method\ (Ours) & \textbf{1.00 {\footnotesize $\pm$ 0.00}} & {0.193 {\footnotesize $\pm$ 0.023}} & \textbf{0.15 {\footnotesize $\pm$ 0.01}} & \textbf{1.00 {\footnotesize $\pm$ 0.00}} & \underline{0.049 {\footnotesize $\pm$ 0.002}} & \textbf{0.15 {\footnotesize $\pm$ 0.01}} \\
 
\midrule
Plug-in & \textbf{1.00 {\footnotesize $\pm$ 0.00}} & {0.651 {\footnotesize $\pm$ 0.013}} & 6.14 {\footnotesize $\pm$ 1.45}
        & \textbf{1.00 {\footnotesize $\pm$ 0.00}} & 0.097 {\footnotesize $\pm$ 0.007} & 4.03 {\footnotesize $\pm$ 0.40} \\
Whitehouse et al. & \textbf{1.00 {\footnotesize $\pm$ 0.00}} & 0.786 {\footnotesize $\pm$ 0.014} & 8.32 {\footnotesize $\pm$ 1.18}
                  & \textbf{1.00 {\footnotesize $\pm$ 0.00}} & 0.129 {\footnotesize $\pm$ 0.012} & 8.03 {\footnotesize $\pm$ 0.85} \\
Levis et al. & \textbf{1.00 {\footnotesize $\pm$ 0.00}} & 0.677 {\footnotesize $\pm$ 0.012} & 7.87 {\footnotesize $\pm$ 0.74}
             & \textbf{1.00 {\footnotesize $\pm$ 0.00}} & 0.107 {\footnotesize $\pm$ 0.010} & 8.30 {\footnotesize $\pm$ 0.97} \\
Bayes-BP & \textbf{1.00 {\footnotesize $\pm$ 0.00}} & 0.713 {\footnotesize $\pm$ 0.016} & 64.60 {\footnotesize $\pm$ 4.23} & \textbf{1.00 {\footnotesize $\pm$ 0.00}} & 0.201 {\footnotesize $\pm$ 0.010} & 67.76 {\footnotesize $\pm$ 4.34} \\
Bayes  & \textbf{1.00 {\footnotesize $\pm$ 0.00}} & \underline{0.068 {\footnotesize $\pm$ 0.001}} & 132.61 {\footnotesize $\pm$ 22.89} & \textbf{1.00 {\footnotesize $\pm$ 0.00}} & 0.144 {\footnotesize $\pm$ 0.005} & 134.27 {\footnotesize $\pm$ 14.07} \\
 
\bottomrule
\end{tabular}%
}
\end{table*}
 
\begin{table*}[t]
\centering
\caption{Results on STAR benchmark comparing Regular+Aide vs.\ Regular class sizes using reading scores as outcome. We report validity, normalized interval width, and inference time per 1,000 samples (mean $\pm$ standard error over 10 seeds), stratified by instrument strength. Best validity and time in \textbf{bold}; best width among perfectly valid methods is \underline{underlined}.}
\label{tab:star-regaide-regular-reading}
\resizebox{\linewidth}{!}{%
\begin{tabular}{lccc|ccc}
\toprule
& \multicolumn{3}{c|}{\textbf{Weak instrument} $\rho(Z, T) \approx 0.28$} & \multicolumn{3}{c}{\textbf{Strong instrument} $\rho(Z, T) \approx 0.89$} \\
\cmidrule(lr){2-4}\cmidrule(lr){5-7}
\textbf{Method} &
\textbf{Validity} & \textbf{ Width} & \textbf{Time/1k (s)} &
\textbf{Validity} & \textbf{ Width} & \textbf{Time/1k (s)} \\
\midrule
\method\ (Ours) & \textbf{1.00 {\footnotesize $\pm$ 0.00}} & {0.194 {\footnotesize $\pm$ 0.033}} & \textbf{0.15 {\footnotesize $\pm$ 0.01}} & \textbf{1.00 {\footnotesize $\pm$ 0.00}} & \underline{0.039 {\footnotesize $\pm$ 0.002}} & \textbf{0.15 {\footnotesize $\pm$ 0.01}} \\
 
\midrule
Plug-in & \textbf{1.00 {\footnotesize $\pm$ 0.00}} & 0.654 {\footnotesize $\pm$ 0.015} & 4.02 {\footnotesize $\pm$ 0.47}
        & \textbf{1.00 {\footnotesize $\pm$ 0.00}} & 0.099 {\footnotesize $\pm$ 0.006} & 4.48 {\footnotesize $\pm$ 0.82} \\
Whitehouse et al. & \textbf{1.00 {\footnotesize $\pm$ 0.00}} & 0.789 {\footnotesize $\pm$ 0.015} & 7.79 {\footnotesize $\pm$ 0.62}
                  & \textbf{1.00 {\footnotesize $\pm$ 0.00}} & 0.136 {\footnotesize $\pm$ 0.010} & 7.79 {\footnotesize $\pm$ 0.75} \\
Levis et al. & \textbf{1.00 {\footnotesize $\pm$ 0.00}} & 0.682 {\footnotesize $\pm$ 0.014} & 8.16 {\footnotesize $\pm$ 1.14}
             & \textbf{1.00 {\footnotesize $\pm$ 0.00}} & 0.108 {\footnotesize $\pm$ 0.009} & 7.27 {\footnotesize $\pm$ 0.44} \\
Bayes-BP & \textbf{1.00 {\footnotesize $\pm$ 0.00}} & 0.722 {\footnotesize $\pm$ 0.006} & 68.61 {\footnotesize $\pm$ 4.00} & \textbf{1.00 {\footnotesize $\pm$ 0.00}} & 0.208 {\footnotesize $\pm$ 0.002} & 75.25 {\footnotesize $\pm$ 15.78} \\
Bayes  & \textbf{1.00 {\footnotesize $\pm$ 0.00}} & \underline{0.072 {\footnotesize $\pm$ 0.001}} & 141.94 {\footnotesize $\pm$ 9.28} & \textbf{1.00 {\footnotesize $\pm$ 0.00}} & 0.139 {\footnotesize $\pm$ 0.008} & 155.83 {\footnotesize $\pm$ 19.42} \\
 
\bottomrule
\end{tabular}%
}
\end{table*}
 
\begin{table*}[t]
\centering
\caption{Results on STAR benchmark comparing Small vs.\ Regular class sizes using math scores as outcome. We report validity, normalized interval width, and inference time per 1,000 samples (mean $\pm$ standard error over 10 seeds), stratified by instrument strength. Best validity and time in \textbf{bold}; best width among perfectly valid methods is \underline{underlined}.}
\label{tab:star-small-regular-math}
\resizebox{\linewidth}{!}{%
\begin{tabular}{lccc|ccc}
\toprule
& \multicolumn{3}{c|}{\textbf{Weak instrument} $\rho(Z, T) \approx 0.29$} & \multicolumn{3}{c}{\textbf{Strong instrument} $\rho(Z, T) \approx 0.89$} \\
\cmidrule(lr){2-4}\cmidrule(lr){5-7}
\textbf{Method} &
\textbf{Validity} & \textbf{ Width} & \textbf{Time/1k (s)} &
\textbf{Validity} & \textbf{ Width} & \textbf{Time/1k (s)} \\
\midrule
\method\ (Ours) & \textbf{1.00 {\footnotesize $\pm$ 0.00}} & \underline{0.265 {\footnotesize $\pm$ 0.036}} & \textbf{0.17 {\footnotesize $\pm$ 0.01}} & \textbf{1.00 {\footnotesize $\pm$ 0.00}} & \underline{0.052 {\footnotesize $\pm$ 0.003}} & \textbf{0.17 {\footnotesize $\pm$ 0.02}} \\
 
\midrule
Plug-in & \textbf{1.00 {\footnotesize $\pm$ 0.00}} & {0.659 {\footnotesize $\pm$ 0.011}} & 7.99 {\footnotesize $\pm$ 0.70}
        & 0.90 {\footnotesize $\pm$ 0.10} & 0.105 {\footnotesize $\pm$ 0.007} & 7.84 {\footnotesize $\pm$ 0.80} \\
Whitehouse et al. & \textbf{1.00 {\footnotesize $\pm$ 0.00}} & 0.822 {\footnotesize $\pm$ 0.015} & 16.5 {\footnotesize $\pm$ 2.0}
                  & \textbf{1.00 {\footnotesize $\pm$ 0.00}} & 0.148 {\footnotesize $\pm$ 0.010} & 16.2 {\footnotesize $\pm$ 1.8} \\
Levis et al. & \textbf{1.00 {\footnotesize $\pm$ 0.00}} & 0.686 {\footnotesize $\pm$ 0.014} & 17.6 {\footnotesize $\pm$ 1.6}
             & \textbf{1.00 {\footnotesize $\pm$ 0.00}} & 0.116 {\footnotesize $\pm$ 0.008} & 15.9 {\footnotesize $\pm$ 2.1} \\
Bayes-BP & \textbf{1.00 {\footnotesize $\pm$ 0.00}} & 0.715 {\footnotesize $\pm$ 0.013} & 55.81 {\footnotesize $\pm$ 15.22} & \textbf{1.00 {\footnotesize $\pm$ 0.00}} & 0.223 {\footnotesize $\pm$ 0.017} & 91.46 {\footnotesize $\pm$ 13.15} \\
Bayes  & 0.60 {\footnotesize $\pm$ 0.17} & 0.078 {\footnotesize $\pm$ 0.001} & 84.06 {\footnotesize $\pm$ 28.17} & \textbf{1.00 {\footnotesize $\pm$ 0.00}} & 0.154 {\footnotesize $\pm$ 0.005} & 168.46 {\footnotesize $\pm$ 28.61} \\
 
\bottomrule
\end{tabular}%
}
\end{table*}

\section{Additional Experiments - Calibration and Sensitivity Analysis}
\label{appx:calibration-sensitivity}

The main experiments report \emph{validity} as the binary indicator that an estimated interval contains the ground-truth SATE, averaged over $10$ seeds at a single nominal credibility level. Here we further evaluate \method's interval estimator along two complementary axes: \stepindicator{i} \emph{calibration} (\cref{appx:calibration}), the empirical coverage of the credible interval as a function of the nominal level $1 - \alpha$, and \stepindicator{ii} \emph{sensitivity} (\cref{appx:sensitivity}), the behavior of coverage and interval width as we vary the sample size $n$ and the covariate dimensionality $d$. Both studies share the same trained \method\ checkpoint used for the main results, so any variation we report is attributable to the data-generating process rather than to retraining.

\subsection{Synthetic IV Data-Generating Processes}
\label{appx:calibration-DGPs}

We design three families of IV-consistent DGPs that differ in the complexity of the structural link functions. In every family, $\vect X \in \R^d$ is sampled i.i.d.\ from $\uniform(-2, 2)^d$, $Z \sim \bernoulli(0.5)$ is the binary instrument, and $U \sim \normal{0}{1}$ is a latent confounder of $T$ and $Y$ ($Z$ is exogenous in $U$). The treatment is generated as
\begin{equation}
\label{eq:appx-calib-treatment}
    T \mid \vect X, Z, U \sim \bernoulli(p_T), \quad p_T \;=\; \sigmoid\!\lpar a \cdot \tilde Z + h_T(\vect X) + \gamma_T U\rpar,
\end{equation}
where $\tilde Z = (Z - 0.5)/\sqrt{0.25}$ is the standardized instrument, $a = 2$ controls instrument strength, $\gamma_T = 1$ controls confounding, and $p_T$ is clipped to $[0.05, 0.95]$ to enforce positivity. The continuous outcome is
\begin{equation}
\label{eq:appx-calib-outcome}
    Y \;=\; \mu_T(\vect X) + \gamma_Y U + \sigma_Y \eta, \quad \eta \sim \normal{0}{1}, \quad \gamma_Y = 1, \;\; \sigma_Y = 0.5,
\end{equation}
so $Z$ does not appear in the outcome equation, satisfying the exclusion restriction in \cref{assump:iv}.

The three families differ only in the link functions $(h_T, \mu_0, \mu_1)$:
\begin{itemize}[leftmargin=*]
    \item \textbf{Linear}: $h_T(\vect x) = \vect w_T^\top \vect x$ and $\mu_t(\vect x) = \vect w_{Y0}^\top \vect x + t\beta$, with $\vect w_T, \vect w_{Y0} \sim \normal{\zeros}{I_d/d}$ and $\beta \sim \uniform(0.5, 2.0)$ (random sign). The treatment effect is homogeneous.
    \item \textbf{Polynomial}: same as Linear but with an added centered quadratic term in $h_T$, a sinusoidal $\mu_0(\vect x) = \sin(\vect w_Y^\top \vect x)$, and $\mu_1(\vect x) = \mu_0(\vect x)  + \beta + \kappa \cdot (x_{1}^2 - 4/3)$ for $\beta \sim \uniform(0.5, 1.5)$ and $\kappa \sim \uniform(0.3, 0.8)$ (random signs). The quadratic terms are mean-centered ($\E[X^2] = 4/3$ for $X \sim \uniform(-2, 2)$) so they introduce no marginal shift.
    \item \textbf{DeepNonlinear}: $h_T$, $\mu_0$, and $\mu_1$ are independent two-layer $\tanh$-MLPs with hidden width $16$ and i.i.d.\ Gaussian weights, producing fully nonlinear treatment-assignment, baseline, and causal effect surfaces.
\end{itemize}
For each generated dataset, we treat the SATE $\lambda(\data) = \frac{1}{n}\sum_{i=1}^n [\mu_1(\vect x_i) - \mu_0(\vect x_i)]$ as the ground truth, matching the SATE convention in \cref{eq:sample-ate}. We use the same trained \method\ checkpoint as in \cref{sec:results}; continuous outcomes are handled at inference time via the discretization of \citet[\S 6]{levis2025}, as in our other continuous-outcome benchmarks.

\subsection{Calibration Curves}
\label{appx:calibration}

\xhdr{Setup} For each DGP, we generate $K = 100$ datasets at $n = 1024$ and $d = 5$ from independent seeds. For each dataset we run a single forward pass of the trained model to obtain the marginal posterior approximation $q_{\theta^\star}(\cdot \mid \data)$, and read off the credible interval at $18$ nominal levels $1 - \alpha \in \{0.01, \ldots, 0.995\}$ as $[Q_{\alpha/2}, Q_{1-\alpha/2}]$. Empirical coverage at level $1 - \alpha$ is the fraction of the $K$ datasets whose sample SATE lies inside the predicted interval, and is the multi-seed analogue of the validity metric used in \cref{sec:results}.

\xhdr{Results} \cref{fig:calibration} reports empirical coverage versus nominal credibility, with $\pm 1$ standard error bands. Across all three DGPs, empirical coverage \emph{exceeds} the nominal level for every credibility level. The model is also more conservative on the more nonlinear DGPs ($\text{DeepNonlinear} \succeq \text{Polynomial} \succeq \text{Linear}$ at every credibility between $0.4$ and $0.9$). This observation is consistent with the inclusive KL loss that induces conservative over-coverage.

\begin{figure}[t]
    \centering
    \captionsetup{font=small}
    \includegraphics[width=0.45\linewidth]{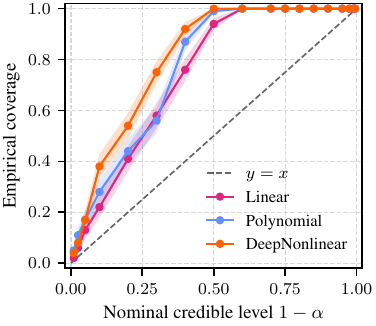}
    \caption{\textbf{Calibration curves on three synthetic IV DGPs.} Empirical coverage versus nominal credibility level $1 - \alpha$ over $K = 100$ datasets per DGP at $n = 1024$, $d = 5$. Bands show $\pm 1$ standard error. \method\ over-covers in the regime that matters for inference and approaches the diagonal smoothly at very narrow nominal levels --- the expected signature of an inclusive KL-trained marginal posterior on partial identification.}
    \label{fig:calibration}
\end{figure}

\subsection{Sensitivity to Sample Size and Dimensionality}
\label{appx:sensitivity}

\xhdr{Setup} For each DGP we run two one-dimensional sweeps at fixed $\alpha = 0.1$ (i.e., $90\%$ nominal credible intervals): a sample-size sweep with $n \in \{256, 512, 1024, 2048, 4096\}$ at $d = 5$, and a dimensionality sweep with $d \in \{2, 4, 8, 16, 32\}$ at $n = 1024$. Each cell uses $K = 30$ independent seeds drawn disjointly from those in \cref{appx:calibration}. We report empirical coverage and the mean normalized interval width $(\hat{\upperbound} - \hat{\lowerbound})/(\max Y - \min Y)$, matching the informativeness convention in \cref{sec:experiments}.

\xhdr{Results} \cref{fig:sensitivity} shows the two sweeps. Coverage stays at $1.00$ in every cell of both sweeps and across all three DGPs, consistent with the conservative over-coverage of the $90\%$ interval characterized in \cref{appx:calibration}; coverage thus carries no signal at this nominal level, and the interesting axis is interval \emph{width}. As $n$ grows from $256$ to $4096$ the mean normalized width contracts by $\sim 20\%$ (from $\sim 0.45$ to $\sim 0.36$), with all three DGPs tracking each other closely. As $d$ grows from $2$ to $32$ the width inflates by $\sim 30\%$ (from $\sim 0.37$ to $\sim 0.49$).
\begin{figure}[t]
    \centering
    \captionsetup{font=small}
    \includegraphics[width=0.85\linewidth]{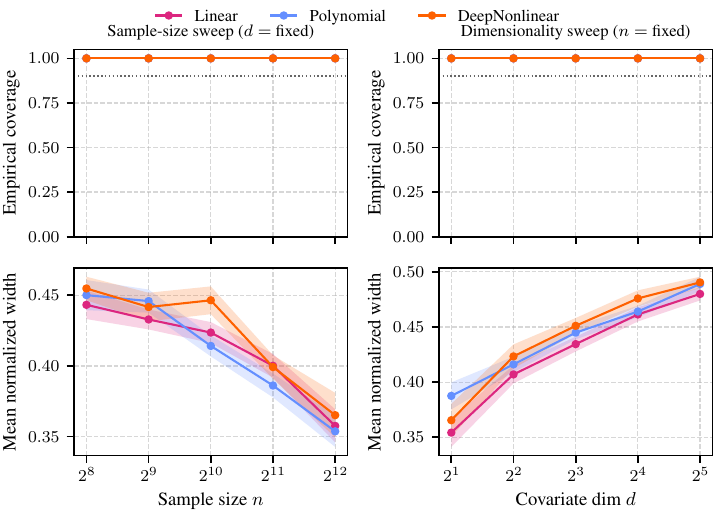}
    \caption{\textbf{Sensitivity to sample size and dimensionality} at $\alpha = 0.1$ ($90\%$ credible intervals). \emph{Top row}: empirical coverage; \emph{bottom row}: mean normalized interval width. Bands show $\pm 1$ standard error over $K = 30$ seeds per cell. Coverage saturates at $1.00$ across the entire grid; widths shrink monotonically with $n$ and grow with $d$, as expected.}
    \label{fig:sensitivity}
\end{figure}

\end{document}